\newif\iffull\fulltrue
\newif\ifsubmission\submissiontrue
\newif\ifmemo\memofalse
\algnewcommand\algorithmicforeach{\textbf{for each}}
\let\expandafter\oldproof\csname\string\proof\endcsname
\let\oldendproof\endproof
\renewenvironment{proof}[1][\proofname]{%
  \oldproof[\bfseries\itshape #1] 
}{\oldendproof}
\theoremstyle{plain}
\newtheorem{thm}{Theorem}[section]
\newtheorem{prop}[thm]{Proposition}
\newtheorem{cor}[thm]{Corollary}
\theoremstyle{definition}
\newtheorem{defn}[thm]{Definition}
\newtheorem{lemma}[thm]{Lemma}
\crefname{section}{\S\!\!}{Sections}
\crefname{defn}{Def.}{Definitions}
\crefname{thm}{Thm.}{Theorems}
\crefname{lemma}{Lem.}{Lemmas}
\crefname{cor}{Cor.}{Corollaries}
\crefname{prop}{Prop.}{Propositions}
\crefname{figure}{Fig.}{Figures}
\crefname{equation}{Eq.}{Equations}
\crefname{align}{Eq.}{Equations}
\crefname{appendix}{Appendix}{Appendixes}
\crefname{algorithm}{Alg.}{Algorithms}
\crefname{table}{Tab.}{Tables}
\crefname{mini}{Prob.}{Problems}
\newcommand{\vect}[1]{\mathbf{#1}}
\DeclareMathOperator{\vectop}{vec}
\newcommand{\R}{\mathbb{R}}
\newcommand{\N}{\mathbb{N}}
\newcommand{\Rnneg}{\mathbb{R}_{\geq 0}}
\newcommand{\Rinf}{\overline{\R}}
\newcommand{\defeq}{\coloneqq}
\newcommand{\seqcomp}{\fatsemi}
\newcommand{\id}[1]{\mathrm{id}_{#1}}
\newcommand{\idcost}[1]{\mathbf{id}_{#1}}
\newcommand{\sd}[1]{\mathbb{#1}}
\newcommand{\OT}[3]{\mathrm{OT}(#1, #2, #3)}
\newcommand{\SeqOT}[4]{\mathrm{SeqOT}(#1 \seqcomp #2, #3, #4)}
\newcommand{\ParOT}[4]{\mathrm{ParOT}(#1\otimes #2, #3, #4)}
\newcommand{\addTS}{+_{\mathrm{t}}}
\newcommand{\Kron}{\otimes_{\R}}
\newcommand{\mulTS}{\cdot_{\mathrm{t}}}
\newcommand{\oOT}[1]{\mathcal{#1}}
\newcommand{\ob}[1]{\mathrm{ob}(#1)}
\newcommand{\composedLP}{\mathrm{CompLP}}
\newcommand{\monLP}{\mathrm{MonLP}}
\newcommand{\axrule}{\mathrm{Ax}}
\newcommand{\seqrule}{\seqcomp}
\newcommand{\sumrule}{\otimes}
\newcommand{\idrule}{\mathrm{ID}}
\DeclareRobustCommand{\svdots}{
  \vbox{%
    \baselineskip=0.33333\normalbaselineskip
    \lineskiplimit=0pt
    \hbox{.}\hbox{.}\hbox{.}%
    \kern-0.2\baselineskip
  }%
}
\def\Setdef#1|#2\Setdef{\left\{#1\,\;\mathstrut\vrule\,\;#2\right\}}%
\renewcommand{\th}{%
    \ifmmode
        ^\mathrm{th}%
    \else%
        \textsuperscript{th}\xspace%
    \fi%
}
\newenvironment{proofs}{%
  \renewcommand{\proofname}{Proof Sketch}\proof}{\endproof}
\title{String Diagram of Optimal Transports}
\author{
    Kazuki Watanabe$^1$
    \and
    Noboru Isobe$^2$\\
    \affiliations
    $^1$National Institute of Informatics\\
    $^2$The University of Tokyo
    \emails
    kazukiwatanabe@nii.ac.jp,
    nobo0409@g.ecc.u-tokyo.ac.jp
}
\begin{document}

\maketitle

\begin{abstract}
We present a novel hierarchical framework for optimal transport (OT) using string diagrams, namely \emph{string diagrams of optimal transports.}
This framework reduces complex hierarchical OT problems to standard OT problems, allowing efficient synthesis of optimal hierarchical transportation plans.
Our approach uses algebraic compositions of cost matrices to effectively model hierarchical structures.
We also study an adversarial situation with multiple choices in the cost matrices, where we present a polynomial-time algorithm for a relaxation of the problem.
Experimental results confirm the efficiency and performance advantages of our proposed algorithm over the naive method.
\end{abstract}

\section{Introduction}
\emph{Optimal transport (OT)} is a classic problem in Operations Research, which nowadays attracts much attention from various fields. 
Kantrovich introduces the discrete OT formulated in linear programming (LP)~\cite{Kantrovich42}; see e.g.~\cite{ftml/PeyreC19}. 
Solving the discrete OT amounts to computing the minimum transportation cost, constrained by two discrete distributions. 
Typically, algorithms that solve the discrete OT also provide a \emph{transportation plan} (or \emph{coupling}) along which the transport achieves the minimum cost. 
Therefore, we can consider the discrete OT as a planning problem to find an optimal transportation plan, which has been well studied in the context of artificial intelligence; see e.g.~\cite{aw/RN2020}.  

In real-world problems, models, which are represented by cost matrices in the context of OT, often have \emph{hierarchical structures}. 
For instance, buildings or maps consist of rooms or streets, respectively, and it is natural to assume that we can rely on such domain-specific knowledge about structures. 
Indeed, hierarchical planning on such hierarchical models has been well studied in hierarchical reinforcement learning~\cite{deds/BartoM03,csur/PateriaSTQ21}.

\emph{String diagrams} (e.g.~\cite{mac2013categories}) are a graphical language that can naturally capture such hierarchical structures with two algebraic compositions: a \emph{sequential composition} $\seqcomp$ and a \emph{parallel composition} $\otimes$. 
In particular, \emph{string diagrams of Markov decision processes (MDPs)}~\cite{cav/WatanabeEAH23,tacas/WatanabeVHRJ24} exploit such hierarchical structures on MDPs to enhance the performance of hierarchical planning on MDPs. 

In this paper, we propose a hierarchical framework of OTs that models such hierarchical structures in string diagrams. 
Specifically, we formulate \emph{string diagrams of OTs} equipped with the two algebraic operations over \emph{open OTs}, which is illustrated in the following example:
\begin{figure}[t]
\centerline{\includegraphics[width=0.9\linewidth]{./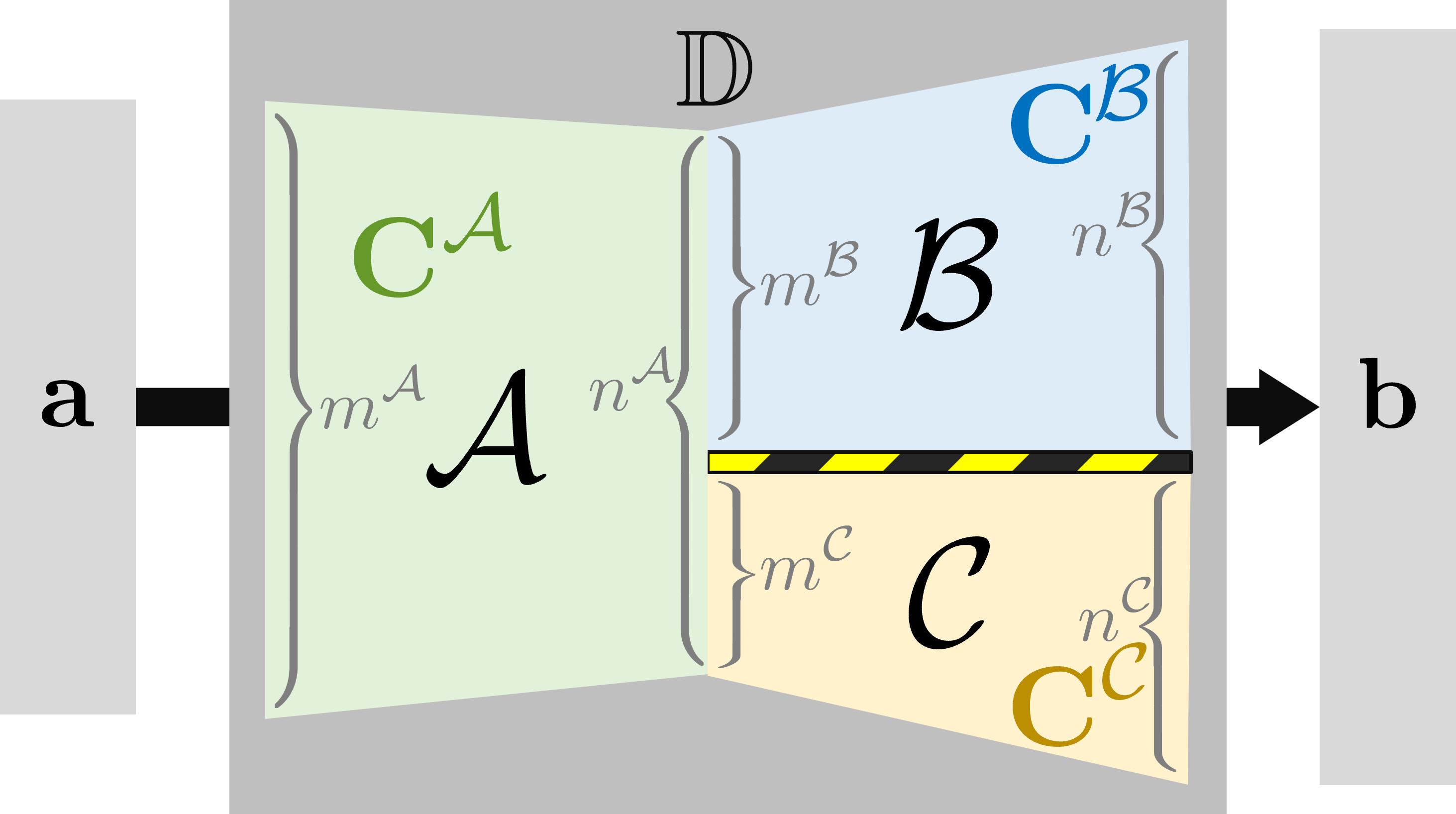}}
\caption{A string diagram $\sd{D}\defeq \oOT{A}\seqcomp (\oOT{B}\otimes\oOT{C})$ with distributions $\vect{a}, \vect{b}$.}
\label{fig:example}
\end{figure}
\paragraph{Example.}
Let $\vect{a}$ and $\vect{b}$ be two distributions. We present a string diagram $\sd{D} \defeq \oOT{A}\seqcomp (\oOT{B}\otimes \oOT{C})$ with open OTs $\oOT{A},\oOT{B}, \oOT{C}$ as an example in~\cref{fig:example}. Each open OT consists of a cost matrix, e.g., the cost matrix of $\oOT{A}$ is $\vect{C}^{\oOT{A}}\in \Rnneg^{m^{\oOT{A}}\times n^{\oOT{A}}}$, where $m^{\oOT{
A}}$ and $n^{\oOT{
A}}$ denote the size of its \emph{connections} on the left and right, respectively. 
We compose open OTs with their connections. 
The connections $m^{\oOT{B}\otimes\oOT{C}}$ and $n^{\oOT{B}\otimes\oOT{C}}$  of the parallel composition $\oOT{B}\otimes \oOT{C}$ are the sum of the connections of the components, that is, $m^{\oOT{B}\otimes\oOT{C}} = m^{\oOT{
B}} + m^{\oOT{C}}$ and $n^{\oOT{B}\otimes\oOT{C}} = n^{\oOT{
B}} + n^{\oOT{C}}$, respectively. 
Then, the connections on the left and right of the sequential composition $\oOT{A}\seqcomp (\oOT{B}\otimes \oOT{C})$ are $m^{\oOT{A
}}$ and $n^{\oOT{B}\otimes\oOT{C}}$. 
Importantly, the connections between $\oOT{A}$ and $(\oOT{B}\otimes \oOT{C})$ must be consistent by the sequential composition $\seqcomp$, that is, $n^{\oOT{A}} = m^{\oOT{B}\otimes\oOT{C}}$. 
The optimal transport problem on $\sd{D}$ is the minimization problem given by 
\[
\min_{\vect{P}^{\oOT{A}}, \vect{P}^{\oOT{B}},\vect{P}^{\oOT{C}}} 
\sum_{\star \in \{\oOT{A}, \oOT{B}, \oOT{C}\}} \langle \vect{C}^{\star}, \vect{P}^{\star}\rangle,
\]
subject to the following three conditions: (i)
\emph{hierarchical transportation plans} $\vect{P}^{\oOT{A}}, \vect{P}^{\oOT{B}}, \vect{P}^{\oOT{C}}$ are non-negative matrices; (ii) the plans are consistent with $\vect{a}$ and $\vect{b}$, that is, 
\begin{align*}
\vect{a} &= \vect{P}^{\oOT{A}}\vect{1}_{n^{\oOT{A}}},
&&\vect{b} = \big((\vect{P}^{\oOT{B}})^{\top}\vect{1}_{m^{\oOT{B}}}, \,(\vect{P}^{\oOT{C}})^{\top}\vect{1}_{m^{\oOT{C}}}\big)
\end{align*}
and (iii) the plans are consistent with the sequential composition $\seqcomp$, that is, \[(\vect{P}^{\oOT{A}})^{\top}\vect{1}_{m^{\oOT{A}}} = (\vect{P}^{\oOT{B}}\vect{1}_{n^{\oOT{B}}},\, \vect{P}^{\oOT{C}}\vect{1}_{n^{\oOT{C}}}).\]

In general, our hierarchical planning problem on string diagrams of OTs can be formulated as follows:
{\it What is an optimal hierarchical transportation plan in the structure, where each component has its cost matrix and components are connected by algebraic operations through connections?}

A challenge of this hierarchical planning on a string diagram of OTs is that the size of its hierarchical transportation plan can increase dramatically due to the compositions. 
Empirically, increasing the size of transportation plans worsens performance, as we can easily imagine. 
In this paper, we propose a novel algorithm that computes an optimal hierarchical transportation plan by \emph{composing} OTs. The crux of our approach is the exploitation of the algebraic structure of cost matrices.  The algebraic structure naturally leads us to define a novel reduction from a string diagram of OTs to a monolithic (or standard) OT. Our algorithm proceeds by (i) reducing a given string diagram to an OT; and (ii) constructing an optimal hierarchical transportation plan of the string diagram from an optimal transportation plan of the OT.



We further extend string diagrams of OTs by allowing adversarial choices in cost structures, which we call \emph{string diagrams of OTs with choices of cost matrices}, as a max-min problem.    
A naive algorithm for this problem can be an enumeration algorithm of the choice of cost matrices. The number of such choices is exponential in the number of compositions, hence the enumeration algorithm is infeasible. 
We propose its relaxation problem, and reduce the relaxation problem to LP, which is solvable in polynomial time. Our reduction is indeed very simple: we can regard the problem as a piecewise-linear minimization through the \emph{mini-max principle} (e.g.~\cite{mazalov2014mathematical}), and thus we can translate the problem into LP. 


We evaluate the performance of our algorithm for string diagrams of OTs using realistic benchmarks.
Specifically, we demonstrate that our proposed algorithm for string diagrams of OTs shows its performance advantage over the naive LP approach; we note that, to the best of our knowledge, LP is the only existing algorithm that can directly solve string diagrams of OTs. The benchmarks for hierarchical planning are inspired by the benchmarks for hierarhical MDPs~\cite{tacas/WatanabeVHRJ24}. 

\paragraph{Contribution.}
Our contributions are as follows: 
\begin{itemize}
    \item We introduce two compositions of OTs, the sequential composition $\seqcomp$, and the parallel composition $\otimes$ (in~\cref{sec:composedOT}).
    \item We study the dual problems of composed OTs, which naturally lead to the notion of compositions over cost matrices (in~\cref{sec:duality}).
    \item We formulate (aligned) string diagrams of OTs as a hierarchical framework of OTs, 
    and present our novel reduction to monolithic (or standard) OTs (in~\cref{sec:sdOT}).
    \item We propose our hierarchical planning algorithm for string diagrams of OTs (in~\cref{sec:reduction}).
    \item We further study string diagrams of OTs with adversarial choices of cost matrices, and show its relaxation problem is solvable in polynomial time (in~\cref{sec:choiceCostMats}).
    \item We evaluate the performance of our algorithm for string diagrams of OTs through experiments (in~\cref{sec:experiments}). 
\end{itemize}

\paragraph{Notation.}
We write  $\Rnneg$ for the set of non-negative real numbers, and  $\Rinf$ for the set of real numbers and the (positive) infinity $\infty$, 
where the multiplication $\infty \cdot 0\defeq 0$. 
For matrices $\vect{X}, \vect{Y}\in \Rinf^{m\times n}$,  we write $\langle \vect{X}, \vect{Y} \rangle $ for $\sum_{i\in [m], j\in [n]} X_{ij}Y_{ij}$, where 
the sets $[m], [n]$ are given by $[m]\defeq \{1, \dots, m\}$ and $[n]\defeq \{1, \dots, n\}$, respectively. 
We write $\Delta^{m}$ for the probability simplex $\Delta^{m} \defeq \{\vect{a}=(a_i)_{i=1}^m \in \Rnneg^{m} \mid \sum_{i\in [m]} a_i = 1\}$. 
We write $\vect{1}_m\in \R^{m}$ for the all-ones vector.

\section{Compositions of Optimal Transports}
\label{sec:composedOT}
In this section, we introduce two compositions of OTs, namely the \emph{sequential composition} $\seqcomp$ and the \emph{parallel composition} $\otimes$. We start by recalling the standard discrete OT.
\begin{defn}[OT]
Given two distributions $\vect{a} \in \Delta^m$ and $\vect{b} \in \Delta^n$, and a cost matrix $\vect{C}\in \Rnneg^{m\times n}$, 
the $\OT{\vect{C}}{\vect{a}}{\vect{b}}$  is the minimization problem defined by 
\begin{align}
    &\min_{\vect{P}\in \Rnneg^{m\times n}} \langle \vect{C}, \vect{P} \rangle\text{ s.t. } \vect{P}\vect{1}_{n} = \vect{a},\text{ and } \vect{P}^\top\vect{1}_{m} = \vect{b}.
\end{align}
\end{defn}

We then extend OTs to a compositional framework of OTs.
The basic components in the compositional framework are \emph{open OTs}.
Formally, an \emph{open OT (oOT)} $\oOT{A}$ is a tuple $(m, n, \vect{C}^{\oOT{A}})$ of the sizes $m, n$ of the connections on the left and right, respectively, and the cost matrix $\vect{C}^{\oOT{A}}\in\Rnneg^{m\times n}$.
We often write $\oOT{A}\colon m\rightarrow n$ with explicitly denoting the size of connections.  
Importantly, oOTs do not have distributions on connections. 

\begin{defn}
[sequentially composed OT]
Given two oOTs $\oOT{A}\defeq (m, l, \vect{C}^{\oOT{A}})$ and $\oOT{B}\defeq (l, n, \vect{C}^{\oOT{B}})$, 
and two distributions $\vect{a} \in \Delta^m$ and $\vect{b} \in \Delta^n$,
we define the \emph{sequentially composed OT} $\SeqOT{\oOT{A}}{\oOT{B}}{\vect{a}}{\vect{b}}$ as follows:
\begin{equation}
    \label{align:seqOT}
    \hspace{-9pt}\begin{array}{c@{}c}
            &\min\limits_{\vect{P}^{\oOT{A}}\in \Rnneg^{m\times l},\,\vect{P}^{\oOT{B}}\in \Rnneg^{l\times n}} \langle \vect{C}^{\oOT{A}}, \vect{P}^{\oOT{A}} \rangle +  \langle \vect{C}^{\oOT{B}}, \vect{P}^{\oOT{B}}\rangle, \\
        & \text{ s.t. }\,\, \begin{aligned}
                             \vect{P}^{\oOT{A}}\vect{1}_{l} = \vect{a},\,(\vect{P}^{\oOT{B}})^\top\vect{1}_{l} = \vect{b},\ (\vect{P}^{\oOT{A}})^\top\vect{1}_{m} = \vect{P}^{\oOT{B}}\vect{1}_{n}
                        \end{aligned}
                        .
    \end{array}
\end{equation}
\end{defn}
An intuition of the sequential composition is the following: the two oOTs $\oOT{A}\colon m\rightarrow l$ and $\oOT{B}\colon l\rightarrow n$ are connected by the connections $l$. 
We shall minimize the transportation cost by optimally choosing a transportation plan while satisfying the consistencies with 
(i) the two distributions $\vect{a}$ and $\vect{b}$ (the first and second conditions of the constraints in~\cref{align:seqOT}); and
(ii) the connections $l$ (the third condition in~\cref{align:seqOT}). 
\begin{defn}[parallelly composed OT]
Given two oOTs $\oOT{A}\defeq (m, n, \vect{C}^{\oOT{A}})$ and $\oOT{B}\defeq (k, l, \vect{C}^{\oOT{B}})$, 
and two distributions $\vect{a} \in \Delta^{m+k}$ and $\vect{b} \in \Delta^{n+l}$,
we define the \emph{parallelly composed OT} $\ParOT{\oOT{A}}{\oOT{B}}{\vect{a}}{\vect{b}}$ as follows:
\begin{equation}
    \hspace{-15pt}\begin{array}{cc}
        &\min\limits_{\vect{P}^{\oOT{A}} \in  \Rnneg^{m\times n},\,
        \vect{P}^{\oOT{B}} \in \Rnneg^{k\times l}} \langle \vect{C}^{\oOT{A}}, \vect{P}^{\oOT{A}} \rangle +  \langle \vect{C}^{\oOT{B}}, \vect{P}^{\oOT{B}}\rangle, \\
    &\text{ s.t. }
    \left\{
    \begin{aligned}
                         &\vect{P}^{\oOT{A}}\vect{1}_{n} = (a_i)_{i=1}^m, (\vect{P}^{\oOT{A}})^{\top}\vect{1}_{m} = (b_j)_{j=1}^n,\\
                        &\vect{P}^{\oOT{B}}\vect{1}_{l} = (a_{i+m})_{i=1}^{k}, (\vect{P}^{\oOT{B}})^{\top}\vect{1}_{k} = (b_{j+n})_{j=1}^{l},
                    \end{aligned}
                    \right.
\end{array}
    \label{prob:ParOT}
\end{equation}
where $\vect{P}^{\oOT{A}}\in  \Rnneg^{m\times n}$ and $\vect{P}^{\oOT{B}}\in \Rnneg^{k\times l}$. 
\end{defn}
In the parallelly composed OT $\ParOT{\oOT{A}}{\oOT{B}}{\vect{a}}{\vect{b}}$, unlike the sequentially composed OTs, 
there are no interactions between the two oOTs $\oOT{A}$ and $\oOT{B}$.
Therefore, we can solve $\ParOT{\oOT{A}}{\oOT{B}}{\vect{a}}{\vect{b}}$ by solving the two unbalanced OTs independently---a generalization of OTs that does not require the sum of distributions to be $1$ (e.g.~\cite{chizat2018unbalanced}). 
It is easy to see that the parallelly composed OT has a solution if the equation $\sum^{m}_{i=1} a_i = \sum^{n}_{j=1}  b_j$ holds.




\section{ Duality}
\label{sec:duality}
In this section, we show the  dual problems of sequentially composed OTs and parallelly composed OTs, 
which shed light on an algebraic structure in OTs. 

\subsection{Dual Problem of Sequentially Composed OTs}
Given two oOTs $\oOT{A}\defeq (m, l, \vect{C}^{\oOT{A}})$ and $\oOT{B}\defeq (l, n, \vect{C}^{\oOT{B}})$, 
and two distributions $\vect{a} \in \Delta^m$ and $\vect{b} \in \Delta^n$,
the  dual problem of $\SeqOT{\oOT{A}}{\oOT{B}}{\vect{a}}{\vect{b}}$ is given as follows. 
\begin{defn}
    \label{def:dualSeq}
    The \emph{dual problem of $\SeqOT{\oOT{A}}{\oOT{B}}{\vect{a}}{\vect{b}}$} is given by 
    \begin{gather}
         \sup_{\vect{f}\in \R^{m},\, \vect{g}\in \R^{n}} \sum^{m}_{i=1} f_i\cdot a_i + \sum^{n}_{j=1} g_j\cdot b_j\label{align:constDualSeq}\\
         \text{ s.t. }  f_i + g_j \leq C^{\oOT{A}}_{ik} + C^{\oOT{B}}_{kj},
        \label{eq:only_fg}
    \end{gather}
     for any $i\in [m], k\in [l],\text{ and }j\in [n]$.
\end{defn}
Since $\SeqOT{\oOT{A}}{\oOT{B}}{\vect{a}}{\vect{b}}$ has a feasible solution and all constraints are linear, 
the strong duality theorem holds.
The proof is followed by the standard theory of LP (e.g.~\cite[Theorem 4.4]{bertsimas1997introduction}); see the technical appendix ($\S$A) for the proof. 
\begin{prop}[strong duality]
    \label{prop:strong_duality_Seq}
    The  dual problem defined in~\cref{def:dualSeq} is equivalent to $\SeqOT{\oOT{A}}{\oOT{B}}{\vect{a}}{\vect{b}}$. 
\end{prop}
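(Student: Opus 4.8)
The plan is to derive the dual in \cref{def:dualSeq} directly from Lagrangian duality, eliminate the auxiliary multipliers coming from the coupling constraint, and then invoke strong duality for linear programs using primal feasibility.

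First I would write $\SeqOT{\oOT{A}}{\oOT{B}}{\vect{a}}{\vect{b}}$ as a linear program in the decision variables $P^{\oOT{A}}_{ik}\geq 0$ ($i\in[m],k\in[l]$) and $P^{\oOT{B}}_{kj}\geq 0$ ($k\in[l],j\in[n]$), attaching a free multiplier $f_i$ to each constraint $\sum_k P^{\oOT{A}}_{ik}=a_i$, a free multiplier $g_j$ to each constraint $\sum_k P^{\oOT{B}}_{kj}=b_j$, and a free multiplier $h_k$ to each coupling constraint $\sum_i P^{\oOT{A}}_{ik}=\sum_j P^{\oOT{B}}_{kj}$. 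Forming the Lagrangian and taking the infimum over the nonnegative orthant, the coefficient of $P^{\oOT{A}}_{ik}$ is $C^{\oOT{A}}_{ik}-f_i-h_k$ and the coefficient of $P^{\oOT{B}}_{kj}$ is $C^{\oOT{B}}_{kj}-g_j+h_k$, so boundedness below forces the dual-feasibility inequalities $f_i+h_k\leq C^{\oOT{A}}_{ik}$ and $g_j-h_k\leq C^{\oOT{B}}_{kj}$, and on this set the dual objective is exactly $\sum_i f_i a_i+\sum_j g_j b_j$, which is \cref{align:constDualSeq}.

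Second I would eliminate $\vect{h}$ by a Fourier--Motzkin step. For a fixed pair $(\vect{f},\vect{g})$ and a fixed $k\in[l]$, a value $h_k$ compatible with the above inequalities exists iff $\max_{j\in[n]}\bigl(g_j-C^{\oOT{B}}_{kj}\bigr)\leq\min_{i\in[m]}\bigl(C^{\oOT{A}}_{ik}-f_i\bigr)$, and spelling this out over all $i,j$ gives precisely $f_i+g_j\leq C^{\oOT{A}}_{ik}+C^{\oOT{B}}_{kj}$, i.e.\ \cref{eq:only_fg}. Since $\vect{h}$ does not occur in the objective, projecting it out leaves the optimal value unchanged, so the reduced program is exactly the one in \cref{def:dualSeq}. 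Finally, for strong duality I would note that the primal is feasible — for instance $P^{\oOT{A}}_{ik}=a_i/l$ and $P^{\oOT{B}}_{kj}=b_j/l$ satisfy all three constraint families since $\sum_i a_i=\sum_j b_j=1$ — and its feasible region is compact (each row sum of $\vect{P}^{\oOT{A}}$ is pinned to $a_i\geq 0$, likewise for $\vect{P}^{\oOT{B}}$), so the primal optimum is finite and attained; the strong duality theorem for LPs (e.g.\ \cite[Theorem~4.4]{bertsimas1997introduction}) then yields equality of the two optimal values.

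The only step that is not pure bookkeeping is the $\vect{h}$-elimination: one has to verify that the projection of the full dual-feasible region onto the $(\vect{f},\vect{g})$-coordinates is exactly the constraint set of \cref{def:dualSeq}. This is a short computation, but it is the conceptual core of the statement, since it is where the two sequential couplings collapse into a single triangle-type inequality $f_i+g_j\leq C^{\oOT{A}}_{ik}+C^{\oOT{B}}_{kj}$; everything else is standard linear-programming duality.
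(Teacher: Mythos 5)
Your proposal is correct and follows essentially the same route as the paper's proof: exhibit the feasible point $P^{\oOT{A}}_{ik}=a_i/l$, $P^{\oOT{B}}_{kj}=b_j/l$ (giving attainment on a compact feasible set), invoke LP strong duality with a multiplier $\vect{h}$ for the coupling constraint, and then eliminate $\vect{h}$ by Fourier--Motzkin to obtain the constraints $f_i+g_j\leq C^{\oOT{A}}_{ik}+C^{\oOT{B}}_{kj}$ of \cref{def:dualSeq}. The only difference is presentational: you form the Lagrangian directly, whereas the paper writes the vectorized constraint matrix via Kronecker products before applying the same duality theorem.
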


The constraints shown in~\cref{eq:only_fg} are equivalent to the following inequalities
\begin{align}
    \label{align:euivConstDualSeq}
    f_i + g_j \leq \min_{k\in [l]}\left\{ C^{\oOT{A}}_{ik} + C^{\oOT{B}}_{kj}\right\},
\end{align}
for any $i\in [m], \text{ and }j\in [n]$. 
In fact, the constraints shown in~\cref{align:euivConstDualSeq} are exactly 
those of the dual problem of the $\OT{\vect{C}^{\oOT{A}}\seqcomp \vect{C}^{\oOT{B}}}{\vect{a}}{\vect{b}}$, where 
the \emph{sequential composition} $\vect{C}^{\oOT{A}}\seqcomp \vect{C}^{\oOT{B}}$ of the cost matrices $\vect{C}^{\oOT{A}}$ and $\vect{C}^{\oOT{B}}$ is the $m\times n$ matrix whose component is given by  $(\vect{C}^{\oOT{A}}\seqcomp \vect{C}^{\oOT{B}})_{ij}\defeq \min_{k\in [l]}\{C^{\oOT{A}}_{ik} + C^{\oOT{B}}_{kj}\}$.
An informal interpretation of this fact can be the following: the cost matrix $\vect{C}^{\oOT{A}}\seqcomp \vect{C}^{\oOT{B}}$ collects minimum ``paths'', corresponding to optimal choices of $k$, along which the cost is $C^{\oOT{A}}_{ik} + C^{\oOT{B}}_{kj}$, for each $i$ and $j$. 


We can also think of the sequential composition as the multiplication of matrices over the \emph{min-tropical semiring} $M\defeq (\Rinf, \addTS, \mulTS)$: 
 the addition $r_1 \addTS r_2 $ is defined by the min operator $ r_1 \addTS r_2\defeq \min\{r_1, r_2\}$, 
 and the multiplication $r_1 \mulTS r_2 $ is defined by the sum operator $ r_1 \mulTS r_2\defeq r_1 + r_2$.

\begin{prop}
    \label{prop:eq_seqOT}
   The sequentially composed OT $\SeqOT{\oOT{A}}{\oOT{B}}{\vect{a}}{\vect{b}}$ is equivalent to the $\OT{\vect{C}^{\oOT{A}}\seqcomp \vect{C}^{\oOT{B}}}{\vect{a}}{\vect{b}}$. 
\end{prop}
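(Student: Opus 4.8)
The plan is to obtain the identity by chaining three equalities, all furnished by the linear-programming duality already in place. Write $\vect{C}\defeq\vect{C}^{\oOT{A}}\seqcomp\vect{C}^{\oOT{B}}\in\R^{m\times n}$, a genuine real matrix since each entry $C_{ij}=\min_{k\in[l]}\{C^{\oOT{A}}_{ik}+C^{\oOT{B}}_{kj}\}$ is the minimum of finitely many reals over the nonempty index set $[l]$. First, \cref{prop:strong_duality_Seq} gives that $\SeqOT{\oOT{A}}{\oOT{B}}{\vect{a}}{\vect{b}}$ equals the value of the dual problem of \cref{def:dualSeq}. Second, that dual problem coincides with the Kantorovich dual of $\OT{\vect{C}}{\vect{a}}{\vect{b}}$: both are suprema of the same objective $\sum_{i}f_i a_i+\sum_j g_j b_j$ over $\vect{f}\in\R^m,\vect{g}\in\R^n$, and their feasible regions are equal because the constraint family \cref{eq:only_fg} is logically equivalent to \cref{align:euivConstDualSeq}, whose right-hand side is exactly $C_{ij}$. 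Third, textbook Kantorovich duality for the ordinary discrete OT (e.g.~\cite{ftml/PeyreC19}) identifies that dual with $\OT{\vect{C}}{\vect{a}}{\vect{b}}$. Composing the three equalities yields $\SeqOT{\oOT{A}}{\oOT{B}}{\vect{a}}{\vect{b}}=\OT{\vect{C}^{\oOT{A}}\seqcomp\vect{C}^{\oOT{B}}}{\vect{a}}{\vect{b}}$.

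The only step carrying any content is the constraint equivalence \cref{eq:only_fg}$\Leftrightarrow$\cref{align:euivConstDualSeq}, and even this reduces to the one-line fact that a real number lies weakly below every element of a nonempty finite set precisely when it lies weakly below the minimum of that set; the mild point to be careful about is that the minimizing index set $[l]$ is nonempty (we use $l\geq 1$), so the minimum is attained and the implication holds in both directions. Everything else is bookkeeping. I would also record up front the harmless facts that make ``equivalent'' meaningful: $\SeqOT{\oOT{A}}{\oOT{B}}{\vect{a}}{\vect{b}}$ is feasible with finite value (as already noted before \cref{prop:strong_duality_Seq}; a witness is $\vect{P}^{\oOT{A}}=\vect{a}\vect{h}^{\top}$, $\vect{P}^{\oOT{B}}=\vect{h}\vect{b}^{\top}$ for any $\vect{h}\in\Delta^l$), and $\OT{\vect{C}}{\vect{a}}{\vect{b}}$ is feasible (witness $\vect{P}=\vect{a}\vect{b}^{\top}$) with finite value since $\vect{C}$ is real. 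Thus all four optima in the chain are finite and the duality steps are legitimate.

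As an alternative, I could prove the two inequalities directly on the primal side by a gluing argument, avoiding any appeal to OT duality: from a feasible $(\vect{P}^{\oOT{A}},\vect{P}^{\oOT{B}})$ with common middle marginal $\vect{h}=(\vect{P}^{\oOT{A}})^{\top}\vect{1}_m=\vect{P}^{\oOT{B}}\vect{1}_n$, set $Q_{ikj}\defeq P^{\oOT{A}}_{ik}P^{\oOT{B}}_{kj}/h_k$ (with $0/0$ read as $0$) and $P_{ij}\defeq\sum_k Q_{ikj}$, which is feasible for $\OT{\vect{C}}{\vect{a}}{\vect{b}}$ with cost at most $\langle\vect{C}^{\oOT{A}},\vect{P}^{\oOT{A}}\rangle+\langle\vect{C}^{\oOT{B}},\vect{P}^{\oOT{B}}\rangle$; conversely, from a feasible $\vect{P}$ for $\OT{\vect{C}}{\vect{a}}{\vect{b}}$, pick for each $(i,j)$ a minimizer $k(i,j)$ of $C^{\oOT{A}}_{ik}+C^{\oOT{B}}_{kj}$ over $k\in[l]$ and split $\vect{P}$ along these indices into $\vect{P}^{\oOT{A}},\vect{P}^{\oOT{B}}$, which are feasible for $\SeqOT{\oOT{A}}{\oOT{B}}{\vect{a}}{\vect{b}}$ with the same total cost. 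This route only requires verifying the marginal identities, but since \cref{prop:strong_duality_Seq} and the discussion surrounding \cref{align:euivConstDualSeq} have already been set up, I would present the duality argument as the main proof and mention the gluing construction only as a remark.
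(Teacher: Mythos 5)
Your main argument is correct and is essentially the paper's own route: the paper establishes \cref{prop:eq_seqOT} precisely by combining \cref{prop:strong_duality_Seq} with the observation that the constraints \cref{eq:only_fg} are equivalent to \cref{align:euivConstDualSeq}, whose right-hand side is by definition $(\vect{C}^{\oOT{A}}\seqcomp\vect{C}^{\oOT{B}})_{ij}$, and then invoking standard Kantorovich duality for the monolithic OT; your feasibility remarks and the $l\geq 1$ caveat are harmless additions. The primal gluing construction you sketch as a remark is a valid self-contained alternative, but it is not needed and is not the path the paper takes.
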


Note that the equivalence is w.r.t.~minimum transportation costs, 
not w.r.t. optimal transportation plans. 

\subsection{Dual Problem of Parallelly Composed OT}
Given two oOTs $\oOT{A}\defeq (m, n, \vect{C}^{\oOT{A}})$ and $\oOT{B}\defeq (l, k, \vect{C}^{\oOT{B}})$, 
and two distributions $\vect{a} \in \Delta^{m+l}$ and $\vect{b} \in \Delta^{n+k}$, 
we show the dual problem of the parallelly composed OT $\ParOT{\oOT{A}}{\oOT{B}}{\vect{a}}{\vect{b}}$ as follows: 
\begin{defn}
    \label{def:dualPar}
    The \emph{dual problem of  $\ParOT{\oOT{A}}{\oOT{B}}{\vect{a}}{\vect{b}}$} is given by
    \begin{equation}
    \hspace{-5pt}\begin{array}{ll}
        &\displaystyle\sup\limits_{\vect{f}\in \R^{m+l},\, \vect{g}\in \R^{n+k}} \sum^{m+l}_{i=1} f_i\cdot a_i + \sum^{n+k}_{j=1} g_j\cdot b_j,\\
        \label{align:constDualPar}
        &\text{ s.t. }\quad  
        \begin{aligned}
             f_i + g_j &\leq C^{\oOT{A}}_{i j} &&(\forall i\in [m],\, j\in [l]),\\
            f_{i+m} + g_{j+l} &\leq C^{\oOT{B}}_{i j} &&(\forall i\in [n],\, j\in [k]).
       \end{aligned}
    \end{array}
    \end{equation}
\end{defn}
Under the balance condition $\sum^{m}_{i=1} a_i = \sum^{n}_{j=1}  b_j$ (otherwise $\ParOT{\oOT{A}}{\oOT{B}}{\vect{a}}{\vect{b}}$ may not have a feasible solution), the strong duality theorem holds.
See the technical appendix ($\S$A) for the proof. 

\begin{prop}[strong duality]
    \label{prop:strong_duality_Par}
    Assume that $\sum^{m}_{i=1} a_i = \sum^{n}_{j=1}  b_j$. 
    The  dual problem defined in~\cref{def:dualPar} is equivalent to $\ParOT{\oOT{A}}{\oOT{B}}{\vect{a}}{\vect{b}}$.
\end{prop}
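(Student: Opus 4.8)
The plan is to establish \cref{prop:strong_duality_Par} by the same route already used for \cref{prop:strong_duality_Seq}: exhibit a primal feasible point, invoke the standard strong-duality theorem for linear programs (e.g.~\cite[Theorem 4.4]{bertsimas1997introduction}), and then check that the linear program written in \cref{def:dualPar} really is the LP-dual of \cref{prob:ParOT}. Since $\ParOT{\oOT{A}}{\oOT{B}}{\vect{a}}{\vect{b}}$ is a bounded linear program over a nonempty polyhedron, strong duality gives equality of optimal values, which is exactly the claimed equivalence.

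First I would verify primal feasibility under the hypothesis $\sum_{i=1}^{m} a_i = \sum_{j=1}^{n} b_j$. As noted in \cref{sec:composedOT}, the parallelly composed OT decouples into two independent unbalanced OTs, one with marginals $(a_i)_{i=1}^m$, $(b_j)_{j=1}^n$ of equal mass $s \defeq \sum_{i=1}^m a_i$, and one with marginals $(a_i)_{i=m+1}^{m+l}$, $(b_j)_{j=n+1}^{n+k}$ of equal mass $1-s$ (equal because $\vect{a}\in\Delta^{m+l}$, $\vect{b}\in\Delta^{n+k}$ and the first blocks already balance). Each such unbalanced transport problem with matching total mass is feasible — take the rank-one plan $P_{ij} = a_i b_j / s$ on the first block (and the analogous construction, suitably scaled, on the second), with the degenerate cases $s\in\{0,1\}$ handled by the zero plan on the empty block. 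Hence the primal is feasible, and since $\vect{C}^{\oOT{A}}, \vect{C}^{\oOT{B}}$ are real (finite) and the feasible set is compact, the primal optimum is attained and finite.

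Next I would confirm that \cref{def:dualPar} is the LP-dual. Writing the primal \cref{prob:ParOT} with dual multipliers $f_i$ for the row-sum constraints $\vect{P}^{\oOT{A}}\vect{1}_n = (a_i)_{i=1}^m$ and $f_{i+m}$ for $\vect{P}^{\oOT{B}}\vect{1}_k = (a_i)_{i=m+1}^{m+l}$, and $g_j$, $g_{j+l}$ for the column-sum constraints, the dual objective is $\sum_i f_i a_i + \sum_j g_j b_j$; because these equality constraints are on nonnegative variables $\vect{P}^{\oOT{A}}\ge 0$, $\vect{P}^{\oOT{B}}\ge 0$, the dual constraint attached to the $(i,j)$ entry of $\vect{P}^{\oOT{A}}$ is the inequality $f_i + g_j \le C^{\oOT{A}}_{ij}$, and similarly $f_{i+m} + g_{j+l} \le C^{\oOT{B}}_{ij}$ — exactly \cref{align:constDualPar}. (The $f,g$ are free because the primal constraints are equalities.) Applying strong duality then yields the equality of optimal values.

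The main obstacle is the degenerate boundary case $s = 0$ or $s = 1$, where one of the two sub-blocks has total mass zero: there the only feasible plan on that block is the all-zeros matrix, and one must check that feasibility (and the duality bookkeeping) still goes through — in particular that the corresponding dual constraints are vacuously satisfiable, which they are since the block is either empty or can take $f\equiv g\equiv 0$ there. Beyond that, everything is routine LP duality, and I would simply point to the technical appendix ($\S$A) for the fully spelled-out constraint-matrix computation, mirroring the treatment of \cref{prop:strong_duality_Seq}.
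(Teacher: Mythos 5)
Your proposal is correct, but it takes a genuinely different route from the paper's. The paper's proof of \cref{prop:strong_duality_Par} never invokes the generic LP strong-duality theorem on the joint program: it observes that \cref{prob:ParOT} has no coupling constraints, so its minimum splits into the sum of the minima of two separate problems (one for $\vect{P}^{\oOT{A}}$ with marginals $(a_i)_{i=1}^m,(b_j)_{j=1}^n$, one for $\vect{P}^{\oOT{B}}$ with the remaining marginals), and then, using the mass-balance assumption to view each block as a (rescaled) optimal transport problem, it simply cites the known strong duality for OT \cite[Proposition 2.4]{ftml/PeyreC19} on each block; summing the two duals gives \cref{def:dualPar} because the dual variables of the two blocks are disjoint. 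You instead treat the whole problem as a single LP, exhibit a feasible point, check via the constraint matrix that \cref{def:dualPar} is its LP dual, and apply the Bertsimas--Tsitsiklis theorem --- i.e., you mirror the paper's proof of \cref{prop:strong_duality_Seq}, minus the Fourier--Motzkin step, which is indeed unnecessary here since there is no internal connection constraint and hence no multiplier $\vect{h}$ to eliminate. Both arguments are sound: the paper's decomposition is shorter because it reuses an off-the-shelf OT duality result, while your direct LP argument is more self-contained, uniform with the sequential case, and makes explicit the feasibility construction (including the degenerate block masses $s\in\{0,1\}$) that the paper leaves implicit in the phrase ``can be viewed as optimal transport problems.'' The only cosmetic point is the index bookkeeping for the second block of dual variables (the offsets for $\vect{f}$ and $\vect{g}$ should match the dimensions of the two blocks); your notation follows \cref{def:dualPar} as written, so this does not affect the substance of the argument.
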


Similar to the sequentially composed OTs, the constraints shown in~\cref{align:constDualPar} also have a compositional interpretation. 
    Given two cost matrices $\vect{C}^{\oOT{A}}\in \R^{m\times n}$ and $\vect{C}^{\oOT{B}}\in \R^{l\times k}$, 
    the \emph{parallel composition} $\vect{C}^{\oOT{A}}\otimes \vect{C}^{\oOT{B}}$ is the cost matrix given by
    \[
        \vect{C}^{\oOT{A}}\otimes \vect{C}^{\oOT{B}} \defeq \begin{pmatrix}
            \vect{C}^{\oOT{A}} & \infty\\
            \infty   & \vect{C}^{\oOT{B}}\\ 
        \end{pmatrix}
        \in\Rinf^{(m+l)\times(n+k)}.
    \]
    Formally, it is given by 
    (i) $(\vect{C}^{\oOT{A}}\otimes \vect{C}^{\oOT{B}})_{ij}\defeq C^{\oOT{A}}_{ij}$ if $i\in [m]$ and $j\in [n]$, 
    (ii) $(\vect{C}^{\oOT{A}}\otimes \vect{C}^{\oOT{B}})_{ij}\defeq C^{\oOT{B}}_{(i-m)(j-n)}$ if $i\in [m+1, m+l]$ and $j\in [n+1, n+k]$, and 
    (iii) $(\vect{C}^{\oOT{A}}\otimes \vect{C}^{\oOT{B}})_{ij}\defeq \infty$ otherwise. 

 The intuition of the parallel composition should be obvious---it imposes the $\infty$ penalties in order to prevent from jumping to the other oOT. 

 \begin{prop}
    \label{prop:ParOT_CotimesD}
    Assume that $\sum^{m}_{i=1} a_i = \sum^{n}_{j=1}  b_j$. 
    The prallelly composed OT $\ParOT{\oOT{A}}{\oOT{B}}{\vect{a}}{\vect{b}}$ is equivalent to the $\OT{\vect{C}^{\oOT{A}}\otimes \vect{C}^{\oOT{B}}}{\vect{a}}{\vect{b}}$. 
\end{prop}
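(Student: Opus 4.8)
The plan is to mirror the structure of the sequential case: first identify the dual of $\ParOT{\oOT{A}}{\oOT{B}}{\vect{a}}{\vect{b}}$ (\cref{def:dualPar}, whose equivalence is \cref{prop:strong_duality_Par}), then identify the dual of $\OT{\vect{C}^{\oOT{A}}\otimes \vect{C}^{\oOT{B}}}{\vect{a}}{\vect{b}}$, and finally show that these two dual linear programs have the same feasible region and hence the same optimal value. Since both primal problems are feasible under the standing assumption $\sum_{i=1}^m a_i = \sum_{j=1}^n b_j$ (for the composed OT, feasibility of the block-diagonal coupling with $\infty$ off-diagonal blocks is exactly this balance condition), strong duality on each side then transfers the equality of optima back down to the primal problems, which is the claimed equivalence of minimum transportation costs.

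Concretely, the dual of $\OT{\vect{C}^{\oOT{A}}\otimes \vect{C}^{\oOT{B}}}{\vect{a}}{\vect{b}}$ with $\vect{a}\in\Delta^{m+l}$, $\vect{b}\in\Delta^{n+k}$ is
\begin{align*}
\sup_{\vect{f}\in\R^{m+l},\,\vect{g}\in\R^{n+k}} \sum_{i=1}^{m+l} f_i a_i + \sum_{j=1}^{n+k} g_j b_j \quad\text{s.t.}\quad f_i + g_j \leq (\vect{C}^{\oOT{A}}\otimes\vect{C}^{\oOT{B}})_{ij}
\end{align*}
for all $i\in[m+l]$, $j\in[n+k]$. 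The key step is to unpack the block structure of $\vect{C}^{\oOT{A}}\otimes\vect{C}^{\oOT{B}}$: a constraint $f_i+g_j\le (\vect{C}^{\oOT{A}}\otimes\vect{C}^{\oOT{B}})_{ij}$ is vacuous exactly when the entry is $\infty$, i.e., in the two off-diagonal blocks ($i\in[m]$, $j\in[n+1,n+k]$ or $i\in[m+1,m+l]$, $j\in[n]$). So the only binding constraints are $f_i+g_j\le C^{\oOT{A}}_{ij}$ for $i\in[m]$, $j\in[l]$ (reading the first $n$ of the target coordinates; here one must be careful that the excerpt's $\vect{C}^{\oOT{A}}$ is $m\times n$ while the parallel-composition paragraph uses $\vect{C}^{\oOT{B}}\in\R^{l\times k}$—I would follow the indexing of \cref{def:dualPar}, treating the "first" and "second" coordinate blocks consistently) and $f_{i+m}+g_{j+l}\le C^{\oOT{B}}_{ij}$ for the other block. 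These are precisely the constraints in \cref{align:constDualPar}, and the two objectives coincide verbatim. Hence the two dual problems are literally the same LP.

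The chain is then: $\ParOT{\oOT{A}}{\oOT{B}}{\vect{a}}{\vect{b}} \overset{\text{Prop.~\ref{prop:strong_duality_Par}}}{=} (\text{dual of Def.~\ref{def:dualPar}}) = (\text{dual of } \OT{\vect{C}^{\oOT{A}}\otimes\vect{C}^{\oOT{B}}}{\vect{a}}{\vect{b}}) \overset{\text{strong duality for OT}}{=} \OT{\vect{C}^{\oOT{A}}\otimes\vect{C}^{\oOT{B}}}{\vect{a}}{\vect{b}}$, where the last equality again needs a feasible primal, which is the balance assumption, and uses the standard strong duality for discrete OT (same linear-programming argument cited for \cref{prop:strong_duality_Seq}, namely \cite[Theorem 4.4]{bertsimas1997introduction}). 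One subtlety to flag: the cost matrix $\vect{C}^{\oOT{A}}\otimes\vect{C}^{\oOT{B}}$ takes the value $\infty$, so strictly speaking it is an $\Rinf$-valued OT; I would note that with the convention $\infty\cdot 0 = 0$ the $\infty$ entries simply forbid mass on the off-diagonal blocks and the LP/duality argument goes through unchanged (equivalently, one can replace $\infty$ by a sufficiently large finite constant without affecting the optimum, since the balance condition guarantees an optimal plan that is block-diagonal).

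I expect the main obstacle to be purely bookkeeping rather than conceptual: carefully matching the index ranges and the split of $\vect{a},\vect{b}$ into their two coordinate blocks so that the off-diagonal $\infty$-constraints are seen to be exactly the ones absent from \cref{align:constDualPar}, and making the $\Rinf$-valued duality rigorous (justifying that the dual of an OT with $\infty$ costs drops precisely the constraints corresponding to $\infty$ entries). Everything else follows the template already established for the sequential composition in \cref{prop:eq_seqOT}.
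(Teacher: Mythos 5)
Your proposal is correct and matches the paper's intended argument: the paper itself justifies this proposition by observing that the constraints in \cref{align:constDualPar} are exactly the non-vacuous dual constraints of $\OT{\vect{C}^{\oOT{A}}\otimes \vect{C}^{\oOT{B}}}{\vect{a}}{\vect{b}}$ (the $\infty$ off-diagonal blocks yielding vacuous inequalities), combined with strong duality on both sides under the balance assumption, which is precisely your chain. Your handling of the $\Rinf$-valued cost (via the $\infty\cdot 0=0$ convention, or a large finite surrogate) and of the index bookkeeping in \cref{def:dualPar} is consistent with the paper's treatment.
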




\section{Aligned String Diagram of OTs}
\label{sec:sdOT}
In this section, we introduce aligned string diagrams of OTs by extending the sequentially and parallelly composed OTs. 
We exploit the underlying algebraic structure in cost matrices, which gives a simple formulation of the dual problem. 
Based on the dual problem, we introduce our novel reduction from string diagrams of OTs to (monolithic) OTs; \cref{fig:overview} illustrates the reduction with an example.

\begin{figure}[t]
    \begin{center}
        \includegraphics[width=\linewidth]{./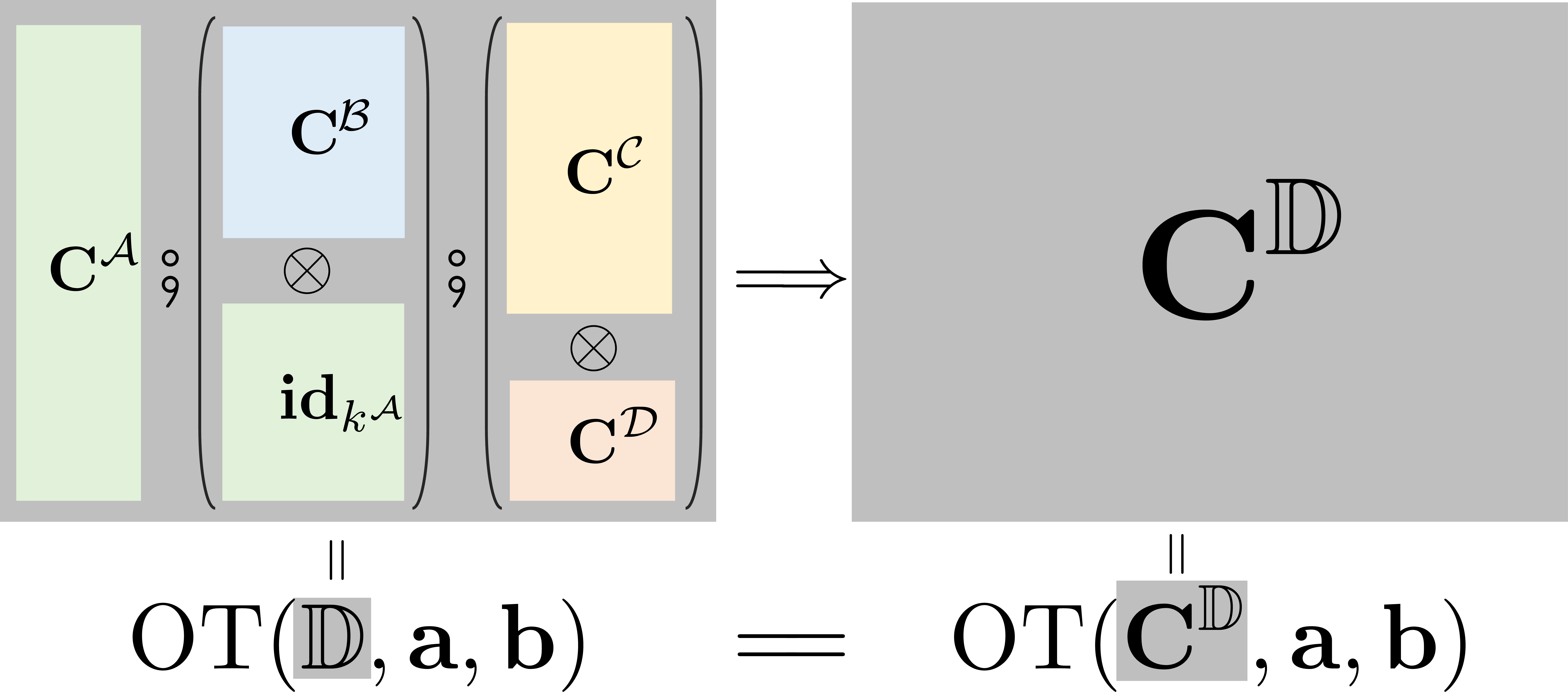}
    \end{center}
    \caption[]{
    An example of our reduction given by~\cref{cor:correctRD}.
        The string diagram $\sd{D}$ in the above figure
        is formally given by $\sd{D}\defeq \oOT{A}\seqcomp (\oOT{B}\otimes \id{k^{\oOT{A}}})\seqcomp (\oOT{C}\otimes \oOT{D})$ 
        with the sequential composition $\seqcomp$, the parallel composition $\otimes$, and the identity $\id{k^{\oOT{A}}}$.
        Similarly, the cost matrix $\vect{C}^{\sd{D}}$ on the right is given by $\vect{C}^{\sd{D}}\defeq \vect{C}^{\oOT{A}}\seqcomp (\vect{C}^{\oOT{B}}\otimes \idcost{k^{\oOT{A}}})\seqcomp (\vect{C}^{\oOT{C}}\otimes \vect{C}^{\oOT{D}})$ with
        the compositions $\seqcomp,\otimes$, and the idenity cost matrix $\idcost{k^{\oOT{A}}}$. 
    }
    \label{fig:overview} 
\end{figure}

\subsection{Aligned String Diagram of OTs}
We introduce \emph{aligned string diagrams of OTs}, which are a subclass of string diagrams of OTs defined later in \cref{sec:reduction}.
The diagram shown on the left of~\cref{fig:overview} is an aligned string diagram of OTs $ \oOT{A}\seqcomp (\oOT{B}\otimes \id{k^{\oOT{A}}})\seqcomp (\oOT{C}\otimes \oOT{D})$, where 
the oOT $\id{k^{\oOT{A}}}$ is given by $ (k^{\oOT{A}},k^{\oOT{A}},\idcost{k^{\oOT{A}}})$ with the \emph{identity cost matrix} $\idcost{k^{\oOT{A}}}$: (i) $(\idcost{k^{\oOT{A}}})_{ij}\defeq 0 $ if $i = j$, and (ii) $(\idcost{k^{\oOT{A}}})_{ij}\defeq \infty $ if $i \not= j$. 
\begin{defn}
    \label{def:aligned_string_diagram}
    Let $H$ and $l_i$ ($i\in[H]$) be positive integers, $\oOT{A}\defeq (m^{\oOT{A}},n^{\oOT{A}}, \vect{C}^{\oOT{A}})$ and 
    $\oOT{B}_{ji}\defeq (m^{\oOT{B}_{ji}},n^{\oOT{B}_{ji}}, \vect{C}^{\oOT{B}_{ji}})$ $(j\in[l_i],i\in[H])$ be oOTs. 
    An \emph{aligned string diagram of OTs $\sd{D}$ of $\oOT{A}$ and $(\oOT{B}_{ji})_{j,i}$} is a formula of oOTs given by 
    \begin{align}
        \label{eq:aligned_string_diagram}
        \oOT{A}\seqcomp (\oOT{B}_{11}\otimes \dots \otimes \oOT{B}_{1l_1})\seqcomp \dots \seqcomp (\oOT{B}_{H1}\otimes \dots \otimes \oOT{B}_{Hl_H})
    \end{align}
    such that the compositions are consistent with connections,  
    \begin{align*}
        n^{\oOT{A}} &= \sum_{i\in [l_1]} m^{\oOT{B}_{1i}},\, &\sum_{i\in [l_j]} n^{\oOT{B}_{ji}} = \sum_{i\in [l_{j+1}]} m^{\oOT{B}_{j+1i}},
    \end{align*}
    for each $j\in [H-1]$,
    Here we assume that (i) $\vect{C}^{\oOT{A}}\in  \R^{m^{\oOT{A}}\times n^{\oOT{A}}}$; and (ii) $\vect{C}^{\oOT{B}_{ji}}\in  \R^{m^{\oOT{B}_{ji}}\times n^{\oOT{B}_{ji}}}$ or $\vect{C}^{\oOT{B}_{ji}} = \idcost{m^{\oOT{B}_{ji}}}$ with $m^{\oOT{B}_{ji}} = n^{\oOT{B}_{ji}}$ for any $j$ and $i$.
\end{defn}
It is important to note that we do not allow the leftmost oOT $\oOT{A}$ to be replaced by $(\oOT{B}_{01}\otimes \dots \otimes \oOT{B}_{0l_0})$ to ensure the existence of a feasible solution. 
In addition, we require that there are no \emph{deadends} in $\sd{D}$, that is, $m^{\oOT{A}}, n^{\oOT{A}}, m^{\oOT{B}_{ji}}, n^{\oOT{B}_{ji}}\geq 1$ for any $j, i$. 
In~\cref{sec:reduction}, we introduce string diagrams of OTs that allow, for example, a formula $\oOT{A}\seqcomp \big((\oOT{B}\seqcomp \oOT{C}) \otimes \oOT{D}\big)$.


We then define our OT on an aligned string diagram $\sd{D}$. 
\begin{defn}
    \label{def:SDOT}
    Let $\vect{a} \in \Delta^{m}$ and $\vect{b} \in \Delta^{n}$ be two distributions, where $n\defeq \sum_{i\in [l_H]} n^{\oOT{B}_{Hi}}$. 
    The \emph{$\OT{\sd{D}}{\vect{a}}{\vect{b}}$ on an aligned string diagram $\sd{D}$ of $\oOT{A}$ and $(\oOT{B}_{ji})_{j,i}$} is given by 
    \begin{equation*}
        \min\limits_{\vect{P}^{\oOT{A}},\, (\vect{P}^{\oOT{B}_{ji}})_{j,i}}  \langle \vect{C}^{\oOT{A}}, \vect{P}^{\oOT{A}} \rangle +  \sum\limits_{j,i}\langle \vect{C}^{\oOT{B}_{ji}}, \vect{P}^{\oOT{B}_{ji}}\rangle,
    \end{equation*} 
    where $\vect{P}^{\oOT{A}}\in \Rnneg^{m^{\oOT{A}}\times n^{\oOT{A}}}$ and $\vect{P}^{\oOT{B}_{ji}}\in \Rnneg^{m^{\oOT{B}_{ji}}\times n^{\oOT{B}_{ji}}}$, subject to the following constraints:
    \begin{enumerate}
        \item $\vect{P}^{\oOT{A}}\vect{1}_{n^{\oOT{A}}} = \vect{a}$, and $(\vect{P}^{\oOT{B}_{H}})^{\top}\vect{1}_{m^{\oOT{B}_{H}}} = \vect{b}$,
        \item $(\vect{P}^{\oOT{A}})^{\top}\vect{1}_{m^{\oOT{A}}} = (\vect{P}^{\oOT{B}_1})\vect{1}_{n^{\oOT{B}_1}}$,
        \item $(\vect{P}^{\oOT{B}_{k}})^{\top}\vect{1}_{m^{\oOT{B}_{k}}} = (\vect{P}^{\oOT{B}_{k+1}})\vect{1}_{n^{\oOT{B}_{k+1}}}$,
    \end{enumerate}
    for each $k\in[H-1]$.
    Here, the matrix $\vect{P}^{\oOT{B}_{k}}$ is given by 
    \[
        \vect{P}^{\oOT{B}_{k}}\defeq 
        \begin{pNiceMatrix}[nullify-dots]
             \vect{P}^{\oOT{B}_{k1}} & & \Block[c]{1-1}<\Huge>{{0}} \\
             & \Ddots  &  \\
             \Block[c]{1-1}<\Huge>{{0}} &  &   \vect{P}^{\oOT{B}_{kl_{k}}}\\
        \end{pNiceMatrix}
         \in\Rnneg^{m^{\oOT{B}_{k}}\times n^{\oOT{B}_{k}}},
    \]
    and $m^{\oOT{B}_{k}} = \sum_{i\in [l_{k}]} m^{\oOT{B}_{ki}}$ and $n^{\oOT{B}_{k}} = \sum_{i\in [l_{k}]} n^{\oOT{B}_{ki}}$, for each $k\in [H]$. 
\end{defn}
Each oOT in a given aligned string diagram $\sd{D}$ has its cost matrix $\vect{C}$ and its transportation plan $\vect{P}$. 
The first constraints are the consistency conditions with the distributions $\vect{a}, \vect{b}$, 
and the second and third constraints are the consistency conditions with the compositions. 

\subsection{Reduction}\label{subsec:Reduction}

We move on to the dual problem of $\OT{\sd{D}}{\vect{a}}{\vect{b}}$.
\begin{defn}
    \label{def:dualcSD}
    The \emph{dual problem of $\OT{\sd{D}}{\vect{a}}{\vect{b}}$} is given by 
    \begin{equation}
    \label{align:constDualcSD}
        \begin{gathered}
            \sup\limits_{\vect{f}\in \R^{m},\, \vect{g}\in \R^{n}} \sum^{m}_{i=1} f_i\cdot a_i + \sum^{n}_{j=1} g_j\cdot b_j,\\
            \text{ s.t. }  f_i + g_j \leq C^{\sd{D}}_{ij},
        \end{gathered}
    \end{equation}
     where $m = m^{\oOT{A}}$, $n = \sum_{i\in [l_{H}]} n^{\oOT{B}_{Hi}}$ and the matrix $\vect{C}^{\sd{D}}$ is given by 
     \begin{align*}
        \vect{C}^\oOT{A}\seqcomp (\vect{C}^{\oOT{B}_{11}}\otimes \dots \otimes \vect{C}^{\oOT{B}_{1l_1}})\seqcomp \dots \seqcomp (\vect{C}^{\oOT{B}_{H1}}\otimes \dots \otimes \vect{C}^{\oOT{B}_{Hl_H}}).
     \end{align*}
\end{defn}
We note that compositions $\seqcomp$ and $\otimes$ for cost matrices can be naturally extended to matrices whose elements can be $\infty$.
Strictly speaking, however, the definition of the matrix $\vect{C}^{\sd{D}}$ itself is not yet well-defined,
because the order of the compositions is not defined.
For instance, it is unclear whether the formula $\vect{C}^{\oOT{C}_1}\seqcomp \vect{C}^{\oOT{C}_2}\seqcomp \vect{C}^{\oOT{C}_3}$ represents $\vect{C}^{\oOT{C}_1}\seqcomp (\vect{C}^{\oOT{C}_2}\seqcomp \vect{C}^{\oOT{C}_3})$ or $(\vect{C}^{\oOT{C}_1}\seqcomp \vect{C}^{\oOT{C}_2})\seqcomp \vect{C}^{\oOT{C}_3}$.
We address this issue by revealing the underlying categorical structure in cost matrices. 

\begin{prop}
    \label{prop:SSMC}
    The cost matrices whose elements are real-value or $\infty$ form 
    a symmetric strict monoidal category (SSMC) whose objects are natural numbers and homomorphisms are cost matrices. 
\end{prop}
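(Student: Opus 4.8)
The plan is to exhibit explicitly the data of a symmetric strict monoidal category and verify the axioms, reducing everything to elementary identities in the min-tropical semiring $M = (\Rinf, \addTS, \mulTS)$. First I would set up the category $\mathbf{Cost}$: objects are natural numbers $m \in \N$, and a morphism $m \to n$ is a matrix $\vect{C} \in \Rinf^{m\times n}$. Composition is the sequential composition $\seqcomp$ defined by $(\vect{C}\seqcomp\vect{D})_{ij} = \min_{k}\{C_{ik} + D_{kj}\} = \bigoplus_k C_{ik}\mulTS D_{kj}$, i.e.\ ordinary matrix multiplication over $M$. The identity on $m$ is the matrix $\idcost{m}$ with $0$ on the diagonal and $\infty$ off-diagonal, which is the multiplicative unit matrix over $M$. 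Associativity of $\seqcomp$ and the identity laws then follow from the fact that $M$ is a semiring: matrix multiplication over any semiring is associative and unital, so $\mathbf{Cost}$ is a category. I would state this as the first step and note the calculation is the standard one — $\big((\vect{C}\seqcomp\vect{D})\seqcomp\vect{E}\big)_{ij} = \min_{k,l}\{C_{ik}+D_{kl}+E_{lj}\} = \big(\vect{C}\seqcomp(\vect{D}\seqcomp\vect{E})\big)_{ij}$, using associativity and commutativity of $+$ and $\min$.

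Next I would equip $\mathbf{Cost}$ with the monoidal structure: on objects $m \otimes n \defeq m + n$, which is strictly associative and has strict unit $0$ (the empty matrix, i.e.\ the unique morphism $0 \to 0$); on morphisms, $\vect{C}\otimes\vect{D}$ is the block-diagonal matrix with $\infty$ in the off-diagonal blocks, exactly as defined just before Prop.~\ref{prop:ParOT_CotimesD}. I would then check functoriality of $\otimes$, i.e.\ $(\vect{C}_1\otimes\vect{D}_1)\seqcomp(\vect{C}_2\otimes\vect{D}_2) = (\vect{C}_1\seqcomp\vect{C}_2)\otimes(\vect{D}_1\seqcomp\vect{D}_2)$ and $\idcost{m}\otimes\idcost{n} = \idcost{m+n}$. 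The second is immediate. For the first, the key point is that the $\infty$ penalties in the off-diagonal blocks force every contributing path through the common index to stay within a single block: $\big((\vect{C}_1\otimes\vect{D}_1)\seqcomp(\vect{C}_2\otimes\vect{D}_2)\big)_{ij} = \min_k\{(\vect{C}_1\otimes\vect{D}_1)_{ik} + (\vect{C}_2\otimes\vect{D}_2)_{kj}\}$, and a term is finite only when $i,k$ and $k,j$ lie in matching blocks, so the minimum splits as a direct sum of the two block-wise products. Strict associativity of $\otimes$ on morphisms (the block structure of $(\vect{C}\otimes\vect{D})\otimes\vect{E}$ literally equals that of $\vect{C}\otimes(\vect{D}\otimes\vect{E})$) and the strict unit laws are then just bookkeeping on block indices.

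Finally I would supply the symmetry: for objects $m, n$ the braiding $\sigma_{m,n}\colon m+n \to n+m$ is the "swap" matrix over $M$, namely $(\sigma_{m,n})_{i,j} = 0$ if $j$ is the image of $i$ under the block transposition $[m+n]\to[n+m]$ and $\infty$ otherwise — equivalently the permutation matrix of the block swap, read in $M$. I would then verify that each $\sigma_{m,n}$ is an isomorphism with inverse $\sigma_{n,m}$ (composing two block-swap permutation matrices over $M$ gives the identity matrix $\idcost{m+n}$), that $\sigma$ is natural in both arguments (post/pre-composing a block-diagonal $\vect{C}\otimes\vect{D}$ with the swap just exchanges the two blocks, which is a direct index computation), and that the two hexagon coherence axioms hold — since the structure is strict these reduce to the identities $\sigma_{m+n,p} = (\idcost{m}\otimes\sigma_{n,p})\seqcomp(\sigma_{m,p}\otimes\idcost{n})$ and its mirror, which hold because both sides are the permutation matrix of the corresponding block rearrangement of $[m+n+p]$. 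I would also note the symmetry equation $\sigma_{n,m}\seqcomp\sigma_{m,n} = \idcost{m+n}$, already covered by invertibility.

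The main obstacle I anticipate is purely notational rather than conceptual: keeping the block-index bookkeeping honest when $\otimes$ is applied to three or more morphisms and when composing swaps with block-diagonal matrices. The one genuinely load-bearing observation — and the only place the specific choice of $\infty$ off-diagonal entries matters — is the functoriality identity $(\vect{C}_1\otimes\vect{D}_1)\seqcomp(\vect{C}_2\otimes\vect{D}_2) = (\vect{C}_1\seqcomp\vect{C}_2)\otimes(\vect{D}_1\seqcomp\vect{D}_2)$, where $\infty$ (with the convention $\infty\cdot 0 = 0$, i.e.\ $\infty$ absorbs under $\addTS$ but is the $\mulTS$-annihilator) is exactly what kills cross-block paths; everything else is a mechanical check that $M$ being a commutative semiring makes matrices over $M$ behave like matrices over any semiring, plus the standard fact that permutation matrices give a symmetric monoidal structure on the skeleton of finite sets, transported along the semiring. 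I would therefore present the functoriality step in full and treat the remaining axioms as routine, citing the semiring-matrix analogy.
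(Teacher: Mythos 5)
Your proposal is correct, but it takes a more elementary route than the paper. The paper's proof is a two-line citation: matrices over any commutative semiring form a PROP (Mac Lane's and Zanasi's results are cited), PROPs are a subclass of SSMCs, and the min-tropical semiring is a commutative semiring, so cost matrices with $\seqcomp$ as tropical matrix product and $\otimes$ as block-diagonal sum inherit the SSMC structure. You instead unwind that general fact by hand: tropical matrix multiplication with unit $\idcost{m}$ gives the category, $+$ on objects and block sum on morphisms give the strict monoidal structure, tropical permutation matrices give the symmetry, and you single out the interchange law $(\vect{C}_1\otimes\vect{D}_1)\seqcomp(\vect{C}_2\otimes\vect{D}_2)=(\vect{C}_1\seqcomp\vect{C}_2)\otimes(\vect{D}_1\seqcomp\vect{D}_2)$ as the one place the $\infty$ off-diagonal blocks do real work. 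This is exactly the content behind the cited result, so both arguments buy the same theorem; the citation is shorter and makes clear the statement has nothing to do with tropicality per se, while your direct verification is self-contained and makes explicit why cross-block paths are killed, which is the intuition the paper appeals to informally. One small terminological correction: in the min-tropical semiring $\infty$ is the \emph{neutral} element for $\addTS=\min$ and the \emph{annihilator} for $\mulTS=+$ (i.e.\ $\infty\mulTS r=\infty$), not ``absorbing under $\addTS$''; also the convention $\infty\cdot 0=0$ from the paper's notation concerns the real inner product $\langle\vect{C},\vect{P}\rangle$, not the tropical operations, so it plays no role here. Your actual block computations use the correct facts, so this slip does not affect the argument.
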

We do not recall the definition of SSMCs,
and we instead show some properties that are useful for our development of string diagrams of OTs. 
We refer to~\cite {mac2013categories,selinger2011survey} as references, and some details are given in the technical appendix ($\S$B). 

Thanks to~\cref{prop:SSMC}, we now know that the following associative laws hold: 
\begin{align}
    \vect{C}^{\oOT{C}_1}\seqcomp (\vect{C}^{\oOT{C}_2}\seqcomp \vect{C}^{\oOT{C}_3}) &= (\vect{C}^{\oOT{C}_1}\seqcomp \vect{C}^{\oOT{C}_2})\seqcomp \vect{C}^{\oOT{C}_3},\\
    \vect{D}^{\oOT{D}_1}\otimes(\vect{D}^{\oOT{D}_2}\otimes \vect{D}^{\oOT{D}_3}) &= (\vect{D}^{\oOT{D}_1}\otimes \vect{D}^{\oOT{D}_2})\otimes \vect{D}^{\oOT{D}_3}.
\end{align}
We can thus omit the parentheses in the definition of $\vect{C}^{\sd{D}}$ in~\cref{def:dualcSD}. 
Importantly, all elements in $\vect{C}^{\sd{D}}$ are real-value due to the aligned structure of $\sd{D}$.

\begin{thm}[strong duality]
    \label{thm:strong_duality}
    The  dual problem defined in~\cref{def:dualcSD} is equivalent to $\OT{\sd{D}}{\vect{a}}{\vect{b}}$.
\end{thm}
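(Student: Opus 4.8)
The plan is to reduce \cref{thm:strong_duality} to the single identity $\OT{\sd{D}}{\vect{a}}{\vect{b}}=\OT{\vect{C}^{\sd{D}}}{\vect{a}}{\vect{b}}$, valid for every pair of boundary distributions $\vect{a}\in\Delta^{m},\vect{b}\in\Delta^{n}$, and then to prove that identity by induction on the number $H$ of layers of $\sd{D}$. The reduction is immediate: since $\vect{C}^{\sd{D}}$ is real-valued (as noted after \cref{prop:SSMC}), the problem in \cref{def:dualcSD} is precisely the Kantorovich dual of the ordinary transport $\OT{\vect{C}^{\sd{D}}}{\vect{a}}{\vect{b}}$, which is feasible (e.g.\ $\vect{P}=\vect{a}\vect{b}^{\top}$) since $\vect{a},\vect{b}$ are probability vectors; thus standard linear-programming duality --- exactly as invoked for \cref{prop:strong_duality_Seq}, see~\cite[Theorem~4.4]{bertsimas1997introduction} --- identifies \cref{def:dualcSD} with $\OT{\vect{C}^{\sd{D}}}{\vect{a}}{\vect{b}}$. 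Hence it suffices to prove the displayed identity.

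Two facts precede the induction. \emph{(Feasibility.)} $\OT{\sd{D}}{\vect{a}}{\vect{b}}$ is feasible for all such $\vect{a},\vect{b}$: because the leftmost component is the single oOT $\oOT{A}$ (so all the unit mass of $\vect{a}$ enters $\oOT{A}$, which may emit any output marginal) and there are no deadends (so every last-layer box is reachable from $\oOT{A}$ along a path of boxes), a feasible tuple of plans is obtained by superposing, over the last-layer boxes $\oOT{B}_{Hi}$, a flow of the mass that $\vect{b}$ places on the connections of $\oOT{B}_{Hi}$, routed from $\oOT{A}$ to $\oOT{B}_{Hi}$ along such a path; this is exactly where the restriction on the leftmost component in \cref{def:aligned_string_diagram} is used. \emph{(Parallel layers.)} \cref{prop:ParOT_CotimesD} extends from two boxes to a whole layer: for $\oOT{B}_{k1}\otimes\dots\otimes\oOT{B}_{kl_k}$ and distributions $\vect{c},\vect{d}$ of the appropriate sizes, the parallelly composed OT of this layer from $\vect{c}$ to $\vect{d}$ equals $\OT{\vect{C}^{\oOT{B}_{k1}}\otimes\dots\otimes\vect{C}^{\oOT{B}_{kl_k}}}{\vect{c}}{\vect{d}}$ --- both sides equal $+\infty$ unless the block sums of $\vect{c}$ and of $\vect{d}$ agree block by block, and when they do the equality follows by induction on $l_k$ from \cref{prop:ParOT_CotimesD} and associativity of $\otimes$ (\cref{prop:SSMC}).

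The induction peels off the last layer. For $H=1$, introducing the shared marginal $\vect{c}\defeq(\vect{P}^{\oOT{A}})^{\top}\vect{1}=\vect{P}^{\oOT{B}_{1}}\vect{1}$ (a probability vector, by mass conservation) separates the program of \cref{def:SDOT} into $\OT{\vect{C}^{\oOT{A}}}{\vect{a}}{\vect{c}}$ plus the parallelly composed OT of the single layer from $\vect{c}$ to $\vect{b}$; by the parallel-layers fact this is $\min_{\vect{c}}\!\big(\OT{\vect{C}^{\oOT{A}}}{\vect{a}}{\vect{c}}+\OT{\vect{C}^{\oOT{B}_{11}}\otimes\dots\otimes\vect{C}^{\oOT{B}_{1l_1}}}{\vect{c}}{\vect{b}}\big)$, which --- unfolding \cref{align:seqOT}, where $\SeqOT{\cdot}{\cdot}{\vect{a}}{\vect{b}}$ is exactly such a minimum over the shared marginal --- equals $\OT{\vect{C}^{\oOT{A}}\seqcomp(\vect{C}^{\oOT{B}_{11}}\otimes\dots\otimes\vect{C}^{\oOT{B}_{1l_1}})}{\vect{a}}{\vect{b}}=\OT{\vect{C}^{\sd{D}}}{\vect{a}}{\vect{b}}$ by \cref{prop:eq_seqOT}. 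For $H>1$, write $\sd{D}=\sd{D}'\seqcomp(\oOT{B}_{H1}\otimes\dots\otimes\oOT{B}_{Hl_H})$ with $\sd{D}'$ the aligned string diagram of the first $H-1$ layers; introducing the shared marginal $\vect{c}$ at the interface between layers $H-1$ and $H$ separates \cref{def:SDOT} into $\OT{\sd{D}'}{\vect{a}}{\vect{c}}$ plus the parallelly composed OT of layer $H$ from $\vect{c}$ to $\vect{b}$; the induction hypothesis rewrites the first summand as $\OT{\vect{C}^{\sd{D}'}}{\vect{a}}{\vect{c}}$, the parallel-layers fact rewrites the second as an ordinary OT against the block-diagonal matrix $\vect{C}^{\oOT{B}_{H1}}\otimes\dots\otimes\vect{C}^{\oOT{B}_{Hl_H}}$, and \cref{prop:eq_seqOT} collapses $\min_{\vect{c}}$ into $\OT{\vect{C}^{\sd{D}'}\seqcomp(\vect{C}^{\oOT{B}_{H1}}\otimes\dots\otimes\vect{C}^{\oOT{B}_{Hl_H}})}{\vect{a}}{\vect{b}}$; this equals $\OT{\vect{C}^{\sd{D}}}{\vect{a}}{\vect{b}}$ by associativity of $\seqcomp$ (\cref{prop:SSMC}), closing the induction.

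The hard part is not a new idea but careful handling of $\infty$ entries and of the ``splitting at an interface'' step. Concretely, one must check: (i) that separating \cref{def:SDOT} at a shared marginal $\vect{c}$ is exact and that $\vect{c}$ ranges over a probability simplex (forced by mass conservation from $\vect{a}\in\Delta^{m}$), so that the feasibility fact is what keeps the minimum over $\vect{c}$ taken over a nonempty set; (ii) that \cref{prop:eq_seqOT} and the definition \cref{align:seqOT} of $\SeqOT{\cdot}{\cdot}{\cdot}{\cdot}$, stated for real-valued cost matrices, extend routinely to $\Rinf$-valued ones --- their proof is the coincidence of two LP duals via strong duality, unaffected as the relevant program stays feasible, and the dual constraint $f_i+g_j\leq C^{\oOT{A}}_{ik}+C^{\oOT{B}}_{kj}$ merely becomes vacuous where a cost entry is $\infty$, matching the min-tropical $\seqcomp$; and (iii) that the essential-balance side conditions of the generalized \cref{prop:ParOT_CotimesD} always hold inside the minimum over $\vect{c}$ (for a block-unbalanced $\vect{c}$ both sides are $+\infty$), and that $\vect{C}^{\sd{D}'}$ is genuinely real-valued --- true since $\sd{D}'$ is itself aligned --- so that the single remaining composition combines a finite with a possibly-infinite cost matrix into the finite matrix $\vect{C}^{\sd{D}}$.
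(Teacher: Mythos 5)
Your proposal is correct and follows essentially the same route as the paper: induction on the number of layers $H$, peeling off the last layer by introducing the interface marginal (the paper's slack variable $\vect{u}$), collapsing the layer via \cref{prop:ParOT_CotimesD} and the sequential step via the $\SeqOT{\cdot}{\cdot}{\vect{a}}{\vect{b}}$ equivalence, and finishing with linear-programming (Kantorovich) duality for the real-valued composed matrix $\vect{C}^{\sd{D}}$. The only cosmetic differences are that you first reduce to the identity $\OT{\sd{D}}{\vect{a}}{\vect{b}}=\OT{\vect{C}^{\sd{D}}}{\vect{a}}{\vect{b}}$ and invoke duality once at the end, and you argue feasibility abstractly where the paper exhibits an explicit feasible plan in the base case.
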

We present the full proof in the technical appendix ($\S$A). 
The proof is obtained by the structural induction of a given aligned string diagram. 
The structure ensures the existence of feasible solutions.
The following is a direct consequence that leads to our novel algorithm.  

\begin{cor}
    \label{cor:correctRD}
    The $\OT{\sd{D}}{\vect{a}}{\vect{b}}$ of the aligned string diagram $\sd{D}$ is equivalent to the (monolithic) $\OT{\vect{C}^{\sd{D}}}{\vect{a}}{\vect{b}}$. 
\end{cor}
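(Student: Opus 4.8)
The plan is to obtain the corollary as an immediate consequence of the strong duality theorem (\cref{thm:strong_duality}) together with the classical strong duality for monolithic OT. First I would note that, by \cref{thm:strong_duality}, the value of $\OT{\sd{D}}{\vect{a}}{\vect{b}}$ equals the optimal value of the dual problem in \cref{def:dualcSD}, namely
\begin{equation*}
    \sup_{\vect{f}\in \R^{m},\, \vect{g}\in \R^{n}} \textstyle\sum_{i=1}^{m} f_i a_i + \sum_{j=1}^{n} g_j b_j \quad\text{s.t. } f_i + g_j \leq C^{\sd{D}}_{ij}.
\end{equation*}
On the other hand, this is literally the Kantorovich dual of the monolithic problem $\OT{\vect{C}^{\sd{D}}}{\vect{a}}{\vect{b}}$, whose cost matrix $\vect{C}^{\sd{D}}$ has only real-valued entries (as noted after \cref{prop:SSMC}, thanks to the aligned structure of $\sd{D}$) and whose marginals $\vect{a}, \vect{b}$ lie in probability simplices, so that the primal is feasible (e.g.\ via the product plan $\vect{a}\vect{b}^\top$) and bounded. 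The standard strong duality for discrete OT (e.g.\ \cite{bertsimas1997introduction}; or specializing \cref{prop:strong_duality_Seq} to the case where one of the two oOTs is an identity) then gives that this dual value equals $\OT{\vect{C}^{\sd{D}}}{\vect{a}}{\vect{b}}$. Chaining the two equalities yields $\OT{\sd{D}}{\vect{a}}{\vect{b}} = \OT{\vect{C}^{\sd{D}}}{\vect{a}}{\vect{b}}$.

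Concretely, the steps in order are: (1) invoke \cref{thm:strong_duality} to rewrite $\OT{\sd{D}}{\vect{a}}{\vect{b}}$ as the optimal value of \cref{align:constDualcSD}; (2) observe that \cref{align:constDualcSD} is exactly the Kantorovich dual of the monolithic OT with cost matrix $\vect{C}^{\sd{D}}$ and marginals $\vect{a}, \vect{b}$ — here I would point out that $\vect{C}^{\sd{D}}$ is well-defined (parentheses omitted) by \cref{prop:SSMC} and finite-valued since $\sd{D}$ is aligned; (3) apply classical LP strong duality for the monolithic discrete OT, which holds because the transport polytope for $\vect{a}\in\Delta^m$, $\vect{b}\in\Delta^n$ is nonempty and the objective is finite; (4) conclude by transitivity of equality.

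I do not expect a real obstacle here, since the corollary is designed to follow from \cref{thm:strong_duality}. The only point requiring a word of care is making sure the dual LP in \cref{def:dualcSD} is syntactically identical to the Kantorovich dual of $\OT{\vect{C}^{\sd{D}}}{\vect{a}}{\vect{b}}$ — which it is, since the constraint $f_i + g_j \le C^{\sd{D}}_{ij}$ and the objective $\sum_i f_i a_i + \sum_j g_j b_j$ match verbatim — and noting that the finiteness of the entries of $\vect{C}^{\sd{D}}$ (guaranteed by alignment) is what makes the monolithic primal feasible with finite optimal value, so that classical strong duality applies without the infinities that would otherwise appear, e.g., in a general parallel composition. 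If one prefers a self-contained argument avoiding a citation, step (3) can instead be derived by regarding the monolithic OT as the degenerate sequentially composed OT $\SeqOT{\oOT{A}'}{\id{n}}{\vect{a}}{\vect{b}}$ with $\oOT{A}' \defeq (m, n, \vect{C}^{\sd{D}})$ and applying \cref{prop:strong_duality_Seq} together with \cref{prop:eq_seqOT}.
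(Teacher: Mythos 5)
Your proposal is correct and follows the paper's intended derivation: the paper presents \cref{cor:correctRD} as a direct consequence of \cref{thm:strong_duality}, precisely because the dual in \cref{def:dualcSD} is verbatim the Kantorovich dual of $\OT{\vect{C}^{\sd{D}}}{\vect{a}}{\vect{b}}$ (with $\vect{C}^{\sd{D}}$ real-valued by alignment), so classical strong duality for the monolithic OT closes the chain exactly as you argue. Your optional self-contained variant via $\SeqOT{\oOT{A}'}{\id{n}}{\vect{a}}{\vect{b}}$ is a harmless aside, though note that \cref{prop:strong_duality_Seq} is stated for real cost matrices while $\idcost{n}$ has $\infty$ entries, so the citation-based step (3) is the cleaner route.
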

Similar to~\cref{prop:eq_seqOT}, the equivalence is w.r.t.~minimum transportation costs, 
not w.r.t.~optimal transportation plans. 

\section{Algorithm}
\label{sec:reduction}
\begin{algorithm}[t]
    \caption{Computation of an optimal hierarchical transportation plan on the aligned string diagram $\sd{D}$}
    \begin{algorithmic}[1]
    \State \label{line:compMat}Get the cost matrix $\vect{C}^{\sd{D}} \gets  \vect{C}^\oOT{A}\seqcomp (\vect{C}^{\oOT{B}_{11}}\otimes \dots \otimes \vect{C}^{\oOT{B}_{1l_1}})\seqcomp \dots \seqcomp (\vect{C}^{\oOT{B}_{k1}}\otimes \dots \otimes \vect{C}^{\oOT{B}_{kl_k}})$.\Comment{recursively compute by the compositions over cost matrices.}
    \State \label{line:monReduction}Compute an optimal transportation plan  $\vect{P}$ on $\OT{\vect{C}^{\sd{D}}}{\vect{a}}{\vect{b}}$.
    \State \label{line:outputCompOPT}Synthesize a hierarchical transportation plan $\vect{P}^{\oOT{A}}, (\vect{P}^{\oOT{B}_{ji}})_{j,i}$ on $\OT{\sd{D}}{\vect{a}}{\vect{b}}$ from $\vect{P}$. \Comment{call~\cref{alg:SyntHTP}}
    \State \Return $\big(\vect{P}^{\oOT{A}}, (\vect{P}^{\oOT{B}_{ji}})_{j,i}\big)$.
    \end{algorithmic}
    \label{alg:CompOPT}
 \end{algorithm}
Based on the reduction procedure in \cref{subsec:Reduction}, we develop an algorithm to compute an optimal hierarchical transportation plan of the given aligned string diagram $\sd{D}$ in \cref{alg:CompOPT}.
In~\cref{line:monReduction}, we can use existing solvers for OTs.
In~\cref{line:outputCompOPT}, we construct an optimal hierarchical transportation plan from the optimal transportation of $\OT{\vect{C}^{\sd{D}}}{\vect{a}}{\vect{b}}$, where $\vect{a}\in \Delta^{m}$ and $\vect{b}\in \Delta^{n}$.


\subsection{Synthesizing Hierarchical Transportation Plans}
\begin{algorithm}[t]
    \caption{Synthesizing Hierarchical Transportation Plan}
    \begin{algorithmic}[1]
    \State \label{line:initHTP}Initialize matrices $\vect{P}^{\oOT{A}}, (\vect{P}^{\oOT{B}_{ji}})_{j,i}$ as $\vect{0}$.
    \ForEach {$x$ in $[m]$ and $y$ in $[n]$}
    \State \, Get the cached shortest path $\pi$ from $x$ to $y$.
    \State \label{line:updateHTP} Add $P_{xy}$ for any elements of  $\vect{P}^{\oOT{A}}, (\vect{P}^{\oOT{B}_{ji}})_{j,i}$ that appear in $\pi$.
    \EndFor
    \State \Return $\big(\vect{P}^{\oOT{A}}, (\vect{P}^{\oOT{B}_{ji}})_{j,i}\big)$.
    \end{algorithmic}
    \label{alg:SyntHTP}
 \end{algorithm}
We now show the detail of the procedure of~\cref{line:outputCompOPT} in~\cref{alg:CompOPT}. 
In~\cref{alg:SyntHTP}, we outline this synthesis of the optimal hierarchical transportation plan.
In~\cref{line:compMat} in~\cref{alg:SyntHTP}, we initialize matrices $\vect{P}^{\oOT{A}}, (\vect{P}^{\oOT{B}_{ji}})_{j,i}$ as $\vect{0}$, which will be returned as an optimal hierarchical transportation plan.
For each pair of $(x, y)\in [m]\times [n]$ ($m$ and $n$ are the dimension of distributions $\vect{a}$ and $\vect{b}$, respectively), we update these matrices with (i) the optimal transportation plan $\vect{P}$ on $\OT{\vect{C}^{\sd{D}}}{\vect{a}}{\vect{b}}$ obtained from~\cref{line:monReduction} in~\cref{alg:CompOPT}; and (ii) one of the shortest paths along cost matrices from $x$ to $y$ on $\sd{D}$, which is efficiently cached when we compose cost matrices in~\cref{line:compMat}.
In~\cref{line:updateHTP} in~\cref{alg:SyntHTP}, we add $P_{xy}$ for any entries in $\vect{P}^{\oOT{A}}, (\vect{P}^{\oOT{B}_{ji}})_{j,i}$ that appears in the cashed shortest path $\pi$ from $x$ to $y$. 

\begin{thm}
The output of~\cref{alg:SyntHTP} is an optimal hierarchical transportation plan. 
\end{thm}
\begin{proofs}
By~\cref{thm:strong_duality}, it suffices to show that the hierarchical transportation cost by $\vect{P}^{\oOT{A}}, (\vect{P}^{\oOT{B}_{ji}})_{j,i}$ coincides with the optimal transportation cost by $\vect{P}$, which is easy.  
\end{proofs}

\subsection{Beyond Aligned String Diagram}
So far, we have focused on aligned string diagrams whose structures are $\oOT{A}\seqcomp (\oOT{B}_{11}\otimes \dots \otimes \oOT{B}_{1l_1})\seqcomp \dots \seqcomp (\oOT{B}_{H1}\otimes \dots \otimes \oOT{B}_{Hl_H})$. 
We relax the restriction on the structure, allowing $\oOT{A}\seqcomp \big((\oOT{B}\seqcomp \oOT{C}) \otimes \oOT{D}\big)$, for instance.  
Here, we define general string diagrams on OTs.
\begin{figure}
    \centering
    \begin{minipage}[b]{0.4\columnwidth}
    \begin{prooftree}
        \AxiomC{$\oOT{A} \defeq (m, n, \vect{C}^{\oOT{A}})$}
        \LeftLabel{$\axrule$}
        \UnaryInfC{$\oOT{A}\colon m\rightarrow n$}
    \end{prooftree}
    \end{minipage}
    \begin{minipage}[b]{0.4\columnwidth}
    \begin{prooftree}
       \AxiomC{$n\in \N$}
       \LeftLabel{$\idrule$}
       \UnaryInfC{$(n, n, \idcost{n})\colon n\rightarrow n$}
    \end{prooftree}
    \end{minipage}
    \begin{minipage}[b]{0.6\columnwidth}
    \begin{prooftree}
        \AxiomC{$\sd{D}_1\colon m\rightarrow l$}
        \AxiomC{$\sd{D}_2\colon l\rightarrow n$}
        \LeftLabel{$\seqrule$}
        \BinaryInfC{$\sd{D}_1\seqcomp \sd{D}_2\colon m\rightarrow n$}
    \end{prooftree}
    \end{minipage}
    \begin{minipage}[b]{0.6\columnwidth}
       \begin{prooftree}
           \AxiomC{$\sd{D}_1\colon m\rightarrow n$}
           \AxiomC{$\sd{D}_2\colon k\rightarrow l$}
           \LeftLabel{$\sumrule$}
           \BinaryInfC{$\sd{D}_1\otimes \sd{D}_2\colon m+k\rightarrow n+l$}
       \end{prooftree}
       \end{minipage}
    \caption{The typing rules for string diagrams.}
    \label{fig:TypeSystem}
 \end{figure}
A \emph{string diagram} $\sd{D}$ is a formula generated by the typing rules shown in~\cref{fig:TypeSystem}. 
The cost matrix $\vect{C}^{\sd{D}}$ of $\sd{D}$ is also inductively defined by (i) $\vect{C}^{\oOT{A}}$ if $\sd{D} = \oOT{A}$; 
(ii) $\idcost{n}$ if $\sd{D} = (n, n, \idcost{n})$;
(iii) $\vect{C}^{\oOT{A}}\seqcomp \vect{C}^{\oOT{B}}$ if $\sd{D}= \oOT{A}\seqcomp \oOT{B}$;
(iv) $\vect{C}^{\oOT{A}}\otimes \vect{C}^{\oOT{B}}$ if $\sd{D}= \oOT{A}\otimes \oOT{B}$.
Then, there is a simple reduction from a string diagram $\oOT{A}\seqcomp \sd{D}_1$ to an aligned string diagram $\oOT{A}\seqcomp \sd{D}_2$, which preserves the cost matrix. 
The string diagram $\sd{D}_2$ is \emph{sequential normal form}: 
$\sd{D}_2$ is given by $\sd{D}_2\defeq  (\oOT{B}_{11}\otimes \dots \otimes \oOT{B}_{1l_1})\seqcomp \dots \seqcomp (\oOT{B}_{H1}\otimes \dots \otimes \oOT{B}_{Hl_H})$. 
\begin{prop}
    \label{prop:SDToCSD}
    Given an oOT $\oOT{A}\colon m\rightarrow l$ and a string diagram $\sd{D}_1\colon l\rightarrow n$,
    there is an aligned string diagram $\oOT{A}\seqcomp \sd{D}_2\colon m\rightarrow n$ such that 
    (i) each oOT in $\sd{D}_2$ bijectively corresponds to an oOT in $\sd{D}_1$ except for identities $(k, k, \idcost{k})$;
    (ii) $\sd{D}_2\colon l\rightarrow n$ is sequential normal form; 
    (iii) the equation 
    $\vect{C}^{\oOT{A}\seqcomp \sd{D}_1} = \vect{C}^{\oOT{A}\seqcomp\sd{D}_2}$ holds.  
\end{prop}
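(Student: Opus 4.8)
The plan is to prove a \emph{normal-form theorem}: every string diagram $\sd{D}_1$ generated by the rules of \cref{fig:TypeSystem} is provably equal, using only the equational laws of a symmetric strict monoidal category, to a \emph{layered} diagram of the shape $(\oOT{B}_{11}\otimes\dots\otimes\oOT{B}_{1l_1})\seqcomp\dots\seqcomp(\oOT{B}_{H1}\otimes\dots\otimes\oOT{B}_{Hl_H})$ of the same type, obtained from $\sd{D}_1$ without ever deleting, duplicating, or altering a box and inserting only fresh identity boxes $(k,k,\idcost{k})$. Since the cost-matrix assignment $\sd{D}\mapsto\vect{C}^{\sd{D}}$ is compositional (it commutes with $\seqcomp$ and $\otimes$ by construction) and its target obeys the SSMC laws by \cref{prop:SSMC}, diagrams equal under those laws receive equal cost matrices; prepending the fixed oOT $\oOT{A}$ to the layered form of $\sd{D}_1$ then yields the sought aligned string diagram. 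The algebraic facts I will use, all instances of \cref{prop:SSMC}, are associativity of $\seqcomp$ and of $\otimes$, the unit laws $\idcost{m}\seqcomp\vect{C}=\vect{C}=\vect{C}\seqcomp\idcost{n}$ and $\idcost{m}\otimes\idcost{n}=\idcost{m+n}$, and the interchange (bifunctoriality) law $(\vect{C}_1\seqcomp\vect{C}_2)\otimes(\vect{D}_1\seqcomp\vect{D}_2)=(\vect{C}_1\otimes\vect{D}_1)\seqcomp(\vect{C}_2\otimes\vect{D}_2)$ whenever the types compose; from the last of these I first derive the rectangular version, that a $p\times q$ array of composable boxes evaluates to the same morphism read columns-first or rows-first.

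I would then induct on the derivation of $\sd{D}_1\colon l\rightarrow n$. For the base cases ($\axrule$, $\idrule$) $\sd{D}_1$ is a single box, already a one-layer normal form, and the claims are immediate. If $\sd{D}_1=\sd{E}_1\seqcomp\sd{E}_2$, the induction hypothesis supplies layered forms of $\sd{E}_1$ and $\sd{E}_2$ with the same cost matrices; by associativity of $\seqcomp$ their concatenation is a layered form of $\sd{D}_1$, and $\vect{C}^{\sd{D}_1}=\vect{C}^{\sd{E}_1}\seqcomp\vect{C}^{\sd{E}_2}$ is unchanged. If $\sd{D}_1=\sd{E}_1\otimes\sd{E}_2$, the induction hypothesis supplies layered forms of $\sd{E}_1$ (height $p$, output width $n'$) and $\sd{E}_2$ (height $q$, output width $n''$); I pad the shorter one by appending single-box identity layers $\idcost{n'}$, respectively $\idcost{n''}$, which is legitimate by the unit law, until both reach height $H=\max\{p,q\}$, and then form the diagram whose $i$-th layer is the juxtaposition of the $i$-th layers of the two padded forms. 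The rectangular interchange law identifies this diagram with $\sd{E}_1\otimes\sd{E}_2$ up to the SSMC laws, so its cost matrix equals $\vect{C}^{\sd{E}_1}\otimes\vect{C}^{\sd{E}_2}=\vect{C}^{\sd{D}_1}$. In all cases the only new boxes are identities, giving the bijection of clause~(i), each layered form is by construction a sequential normal form, and the type-compatibility equalities of \cref{def:aligned_string_diagram} hold because the output type is preserved at every step.

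Finally, prepending $\oOT{A}\colon m\rightarrow l$ gives a well-typed $\oOT{A}\seqcomp\sd{D}_2\colon m\rightarrow n$ with $\sd{D}_2$ in sequential normal form and $\vect{C}^{\oOT{A}\seqcomp\sd{D}_1}=\vect{C}^{\oOT{A}}\seqcomp\vect{C}^{\sd{D}_1}=\vect{C}^{\oOT{A}}\seqcomp\vect{C}^{\sd{D}_2}=\vect{C}^{\oOT{A}\seqcomp\sd{D}_2}$, which is clause~(iii); one also checks the no-deadends side condition of \cref{def:aligned_string_diagram}, which is immediate since the construction only inserts identity layers whose width equals an adjacent, already-present interface width, so positivity of all interface dimensions is inherited from $\sd{D}_1$ and $\oOT{A}$. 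I expect the main obstacle to be the $\otimes$-case: matching the heights of the two layered subdiagrams by inserting identity layers on the correct side and with the correct widths, and invoking the rectangular interchange law with the right bracketing so that the juxtaposed layers truly compose to $\sd{E}_1\otimes\sd{E}_2$; the accompanying bookkeeping of the box bijection and the positivity invariant through all three inductive cases, while conceptually routine, is where care is needed.
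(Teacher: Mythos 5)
Your proposal is correct and follows essentially the same route as the paper: structural induction on $\sd{D}_1$, with the $\otimes$-case handled by padding the shorter layered form with identity layers and peeling off layers via the bifunctoriality/interchange law (\cref{lem:bifunctoriality} in the paper), and cost-matrix preservation coming from the SSMC structure of \cref{prop:SSMC}. The only cosmetic difference is that you phrase the layer-by-layer step as a single ``rectangular interchange'' argument, whereas the paper iterates the binary interchange law one layer at a time.
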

The proof is based on the structural induction; see the technical appendix ($\S$B) for the details. 
 By~\cref{prop:SDToCSD} and its constructive proof, we can define the OT on string diagrams $\oOT{A}\seqcomp \sd{D}$, without assuming that $\oOT{A}\seqcomp \sd{D}$ is aligned. 

\section{Choice of Cost Matrices}
\label{sec:choiceCostMats}
In this section, we develop an extension of string diagrams of OTs with (non-deterministic) choices of cost matrices. For simplicity, we fix the finite set of choices as $C$; it is easy to extend that it allows to employ an individual set of choices $C^{\oOT{A}}$ for each component oOT $\oOT{A}$.  
We extend oOTs into oOTs with choices of cost matrices.
Formally, an oOT $\oOT{A}$ with choices of cost matrices is given by $\big(m, n, (\vect{C}^{\oOT{A}, c})_{c\in C}\big)$, where each choice $c$ induces the cost matrix $\vect{C}^{\oOT{A}, c}\in \Rnneg^{m\times n}$.
Now the planning problem must also consider an adversarial choice of cost matrices that tries to make the transportation cost large. 
\begin{defn}
\label{def:otsChoices}
Let $\vect{a} \in \Delta^{m}$ and $\vect{b} \in \Delta^{n}$ be two distributions, where $n\defeq \sum_{i\in [l_H]} n^{\oOT{B}_{Hi}}$. 
    The $\OT{\sd{D}}{\vect{a}}{\vect{b}}$ \emph{with the choice $C$} on an aligned string diagram $\sd{D}$ is given by 
    \begin{equation*}
        \begin{array}{cl}
        &\max\limits_{\vect{C}^{\oOT{A}}, (\vect{C}^{\oOT{B}_{ji}})_{j,i}}\min\limits_{\vect{P}^{\oOT{A}}, (\vect{P}^{\oOT{B}_{ji}})_{j,i}} 
        \langle \vect{C}^{\oOT{A}}, \vect{P}^{\oOT{A}} \rangle +  \sum\limits_{j,i}\langle \vect{C}^{\oOT{B}_{ji}}, \vect{P}^{\oOT{B}_{ji}}\rangle,
        \end{array}
    \end{equation*} 
    where (i) $\vect{P}^{\oOT{A}}\in \Rnneg^{m^{\oOT{A}}\times n^{\oOT{A}}}$ and $\vect{P}^{\oOT{B}_{ji}}\in \Rnneg^{m^{\oOT{B}_{ji}}\times n^{\oOT{B}_{ji}}}$ are subject to the same constraints defined in~\cref{def:SDOT}; and (ii) $\vect{C}^{\oOT{A}}$ and $(\vect{C}^{\oOT{B}_{ji}})_{j,i}$ should be chosen from the finite sets $\{\vect{C}^{\oOT{A}, c}\mid c\in C\}$ and $\{\vect{C}^{\oOT{B}_{ji}, c}\mid c\in C\}$ for each $i, j$, respectively. We note that the choice $c$ can be done independently for each component, that is, components can choose different choices to achieve the optimal transportation cost. 
\end{defn}
Naively, we can solve the problem by enumerating all choices of cost matrices, the number of which is exponential in the number of compositions.
In practice, this easily becomes infeasible as the number of compositions increases.
Instead, we relax the problem by allowing convex combinations of cost matrices to be chosen, which is crucial for our reduction to LP.
\begin{defn}
\label{def:relaxOtsChoices}
The \emph{relaxation problem of~\cref{def:otsChoices}} is defined by 
    \begin{equation*}
        \begin{array}{cl}
        &\max\limits_{\vect{D}^{\oOT{A}}, (\vect{D}^{\oOT{B}_{ji}})_{j,i}}\min\limits_{\vect{P}^{\oOT{A}}, (\vect{P}^{\oOT{B}_{ji}})_{j,i}} 
        \langle \vect{D}^{\oOT{A}}, \vect{P}^{\oOT{A}} \rangle +  \sum\limits_{j,i}\langle \vect{D}^{\oOT{B}_{ji}}, \vect{P}^{\oOT{B}_{ji}}\rangle,
        \end{array}
    \end{equation*} 
    where $\vect{P}^{\oOT{A}}\in \Rnneg^{m^{\oOT{A}}\times n^{\oOT{A}}}$ and $\vect{P}^{\oOT{B}_{ji}}\in \Rnneg^{m^{\oOT{B}_{ji}}\times n^{\oOT{B}_{ji}}}$, subject to (i) the same constraints defined in~\cref{def:otsChoices}; and (ii) $\vect{D}^{\oOT{A}}$ and $(\vect{D}^{\oOT{B}_{ji}})_{j,i}$ should be chosen from the convex closures of $\{\vect{C}^{\oOT{A}, c}\mid c\in C\}$ and $\{\vect{C}^{\oOT{B}_{ji}, c}\mid c\in C\}$, respectively. 
\end{defn}

The optimal value of~\cref{def:relaxOtsChoices} can be strictly greater than the one of~\cref{def:otsChoices}: see~\cref{sec:relaxation} for the example. 
Our reduction to LP relies on the following mini-max principle, which allows us to exchange the order of $\min$ and $\max$. 
\begin{lemma}[mini-max principle]
The following problem is equivalent to one defined in~\cref{def:relaxOtsChoices}: 
\begin{equation*}
        \begin{array}{cl}
        &\min\limits_{\vect{P}^{\oOT{A}}, (\vect{P}^{\oOT{B}_{ji}})_{j,i}} \max\limits_{\vect{D}^{\oOT{A}}, (\vect{D}^{\oOT{B}_{ji}})_{j,i}} \langle \vect{D}^{\oOT{A}}, \vect{P}^{\oOT{A}} \rangle +  \sum\limits_{j,i}\langle \vect{D}^{\oOT{B}_{ji}}, \vect{P}^{\oOT{B}_{ji}}\rangle
        \end{array}
    \end{equation*} 
    under the constraints that are defined in~\cref{def:otsChoices}.
\end{lemma}
\begin{proofs}
Since the domains are compact and convex, the proof follows from the existence of the pure Nash equilibrium in zero-sum games with concave-convex objectives (see e.g.~\cite{mazalov2014mathematical}). 
\end{proofs}

Now, we can see that the problem is a \emph{piecewise-linear minimization}, which allows us to use a reduction to LP.  

\begin{defn}
\label{def:LPofchoice}
    Let $\vect{a} \in \Delta^{m}$ and $\vect{b} \in \Delta^{n}$ be two distributions, where $n\defeq \sum_{i\in [l_H]} n^{\oOT{B}_{Hi}}$, and let $\sd{D}$ be an aligned string diagram. 
    We introduce an LP by 
    \begin{equation*}
        \begin{array}{cl}
        &\min\limits_{\vect{P}^{\oOT{A}}, (\vect{P}^{\oOT{B}_{ji}})_{j,i},t^{\oOT{A}}, (t^{\oOT{B}_{ji}})_{j,i}}  t^{\oOT{A}} +  \sum\limits_{j,i}t^{\oOT{B}_{ji}},
        \end{array}
    \end{equation*} 
    subject to the same constraints defined in~\cref{def:otsChoices}, and the following additional constraints: 
    \begin{align*}
       \langle \vect{C}^{\oOT{A},c}, \vect{P}^{\oOT{A}} \rangle  &\leq t^{\oOT{A}} &&\forall c\in C,\\
       \langle \vect{C}^{\oOT{B}_{ji},c}, \vect{P}^{\oOT{B}_{ji}} \rangle  &\leq t^{\oOT{B}_{ji}} &&\forall c\in C,\forall j\in[l_i]\text{ and }\forall i\in [H].
    \end{align*}
\end{defn}

\begin{thm}
    \label{thm:strong_duality_choice}
    The problem defined in~\cref{def:relaxOtsChoices} is equivalent to the LP defined in~\cref{def:LPofchoice}.
\end{thm}
The proof follows easily from the standard reduction of piecewise-linear to linear minimization.
As a consequence, the problem is solvable in polynomial time (by restricting all coefficients to be rational).

\section{Experiments}
\label{sec:experiments}
We demonstrate the efficiency of~\cref{alg:CompOPT} for string diagram of OTs by implementing a prototype in Python. 
We implemented two algorithms: 
(i) the \emph{composed LP approach} ($\composedLP$), which directly solves the LP shown in~\cref{def:SDOT}; and
(ii) the \emph{monolithic approach} ($\monLP$), which is based on~\cref{alg:CompOPT}.
\ifsubmission
\else
\footnote{Code and datasets are available at \url{https://github.com/Kazuuuuuki/SolverForStringDiagramOfOTs}}
\fi
The questions that we pose are 
{\bf Q1}: do our proposed monolithic approach ($\monLP$) performs better than the naive LP approach ($\composedLP$),
{\bf Q2}: what is the bottlenecks of $\monLP$, 
and  {\bf Q3}:  how does the complexity of compositional structures affect their performances?

\paragraph{Setup.} 
We ran experiments on a MacBook Pro Apple M3 chip 12-core CPU with a 16GB memory limit. We use the \emph{PuLP} toolkit~\cite{mitchell2011pulp} and the \emph{CBC} solver~\cite{CBCsolver} for LPs.

\paragraph{Benchmarks.} 
As a first set of benchmarks for string diagrams of OTs, we build four benchmarks $\texttt{BRooms}$, $\texttt{URooms}$, $\texttt{BChains}$, $\texttt{UChains}$ that reflect common hierarchical structures found in real-world problems, particularly in hierarchical planning and network protocols~\cite{cav/WatanabeEAH23,tacas/WatanabeVHRJ24,cav/JungesS22,JothimuruganBBA21}.
Our benchmarks are inspired by the standard benchmarks for string diagrams of MDPs~\cite{tacas/WatanabeVHRJ24}. 
In fact, most of the benchmarks in~\cite{tacas/WatanabeVHRJ24} take the form of aligned string diagrams. 
For instance, the benchmark \texttt{Biroom} in~\cite{tacas/WatanabeVHRJ24} takes the form $\oOT{A}\seqcomp (\oOT{B}_{11}\otimes \dots \otimes \oOT{B}_{1l_1})\seqcomp \dots \seqcomp (\oOT{B}_{k1}\otimes \dots \otimes \oOT{B}_{kl_k})\seqcomp \oOT{C}$---
we create the benchmarks \texttt{BRooms} and \texttt{URooms} by resembling its compositional structure. 
Here, each string diagram in the two benchmarks \texttt{BRooms} and \texttt{URooms} represents buildings consisting of many rooms connected by the two compositions.
Each room is modeled by an oOT whose cost matrix denotes the time required to get from the entrances to the exits.  
Similarly, the string diagrams of the two benchmarks \texttt{BChains} and \texttt{UChains} are very close to the string diagrams of MDPs in the benchmark \texttt{ChainsBig}~\cite{tacas/WatanabeVHRJ24}. See~\cref{appendix:benchmarkDetails} for more details. 

\begin{table*}[t]
    \centering
     \caption{Performance for different algorithms. See \textbf{Discussion} for explanations.}
{    \small
    \begin{NiceTabular}{rr  rrrrr  rr}
\toprule
& & 
\multicolumn{5}{c}{$\monLP$} & 
\multicolumn{2}{c}{$\composedLP$}\\
\cmidrule(lr){3-7}\cmidrule(lr){8-9}

\multicolumn{1}{c}{$\sd{D}$} &
\multicolumn{1}{c}{$\#\oOT{A}$} &

\multicolumn{1}{c}{$t_{\vect{C}}$} &
\multicolumn{1}{c}{$t_{\mathrm{LP}}$} &
\multicolumn{1}{c}{$t_{\mathrm{Syn}}$} &
\multicolumn{1}{c}{$t$} &
\multicolumn{1}{c}{$E$} &

\multicolumn{1}{c}{$t$} &
\multicolumn{1}{c}{$E$} \\
\midrule

\multicolumn{1}{c}{\texttt{BRoom1}} &
\multicolumn{1}{c}{200} &

\multicolumn{1}{c}{3.1}  &
\multicolumn{1}{c}{0.1}  &
\multicolumn{1}{c}{0.1}  &
\multicolumn{1}{c}{3.3}  &
\multicolumn{1}{c}{1e-16}  &

\multicolumn{1}{c}{17.6}  &
\multicolumn{1}{c}{1e-15}  \\

\multicolumn{1}{c}{\texttt{BRoom2}} &
\multicolumn{1}{c}{210} &

\multicolumn{1}{c}{32.6}  &
\multicolumn{1}{c}{0.1}  &
\multicolumn{1}{c}{0.1}  &
\multicolumn{1}{c}{32.8}  &
\multicolumn{1}{c}{0.0}  &

\multicolumn{1}{c}{144.8}  &
\multicolumn{1}{c}{1e-16}  \\

\multicolumn{1}{c}{\texttt{URoom1}} &
\multicolumn{1}{c}{400} &

\multicolumn{1}{c}{0.4}  &
\multicolumn{1}{c}{0.1}  &
\multicolumn{1}{c}{0.1}  &
\multicolumn{1}{c}{0.6}  &
\multicolumn{1}{c}{2e-16}  &

\multicolumn{1}{c}{42.0}  &
\multicolumn{1}{c}{3e-16}  \\

\multicolumn{1}{c}{\texttt{URoom2}} &
\multicolumn{1}{c}{600} &

\multicolumn{1}{c}{0.8}  &
\multicolumn{1}{c}{0.1}  &
\multicolumn{1}{c}{0.1}  &
\multicolumn{1}{c}{\textbf{1.0}}  &
\multicolumn{1}{c}{1e-16}  &

\multicolumn{1}{c}{\textbf{95.4}}  &
\multicolumn{1}{c}{3e-16}  \\

\multicolumn{1}{c}{\texttt{BChain1}} &
\multicolumn{1}{c}{210} &

\multicolumn{1}{c}{11.3}  &
\multicolumn{1}{c}{0.1}  &
\multicolumn{1}{c}{0.2}  &
\multicolumn{1}{c}{11.6}  &
\multicolumn{1}{c}{2e-16}  &

\multicolumn{1}{c}{103.2}  &
\multicolumn{1}{c}{9e-16}  \\

\multicolumn{1}{c}{\texttt{BChain2}} &
\multicolumn{1}{c}{400} &

\multicolumn{1}{c}{25.0}  &
\multicolumn{1}{c}{0.1}  &
\multicolumn{1}{c}{0.3}  &
\multicolumn{1}{c}{25.4}  &
\multicolumn{1}{c}{1e-16}  &

\multicolumn{1}{c}{311.0}  &
\multicolumn{1}{c}{1e-16}  \\

\multicolumn{1}{c}{\texttt{UChain1}} &
\multicolumn{1}{c}{399} &

\multicolumn{1}{c}{0.5}  &
\multicolumn{1}{c}{0.1}  &
\multicolumn{1}{c}{0.1}  &
\multicolumn{1}{c}{\textbf{0.7}}  &
\multicolumn{1}{c}{0.0}  &

\multicolumn{1}{c}{\textbf{86.9}}  &
\multicolumn{1}{c}{4e-16}  \\

\multicolumn{1}{c}{\texttt{UChain2}} &
\multicolumn{1}{c}{799} &

\multicolumn{1}{c}{1.3}  &
\multicolumn{1}{c}{0.1}  &
\multicolumn{1}{c}{0.1}  &
\multicolumn{1}{c}{\textbf{1.5}}  &
\multicolumn{1}{c}{6e-16}  &

\multicolumn{1}{c}{\textbf{266.8}}  &
\multicolumn{1}{c}{3e-15}  \\
\bottomrule
    \end{NiceTabular}
    }
   \label{tab:runtimesetc}
\end{table*}

\begin{figure*}
    \center
    \vspace{-1em}
    \includegraphics[width=0.48\linewidth]{./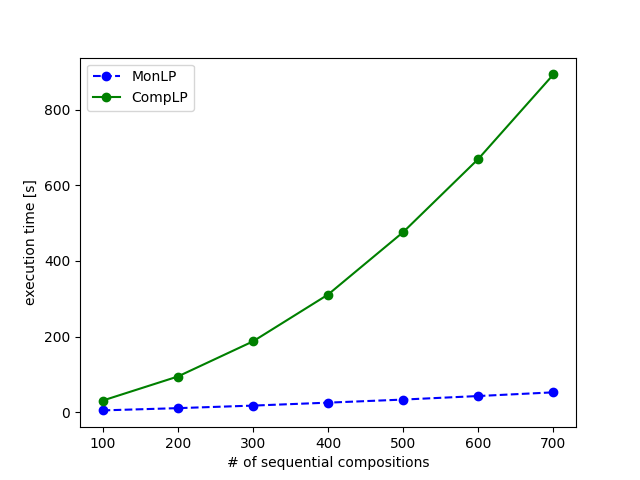}
    \includegraphics[width=0.48\linewidth]{./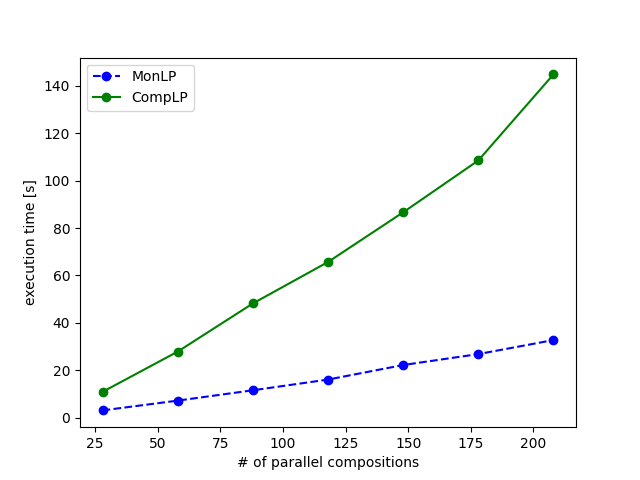}
    \caption{Influence of the complexity of the algebraic structures. See \textbf{Discussion} for explanations.}
    \label{fig:infAlgSt}
\end{figure*}
\paragraph{Discussion.} 
\cref{tab:runtimesetc} and~\cref{fig:infAlgSt} summarize the results of the experiments. 

{\bf Q1.} 
We evaluate the performance of $\monLP$, and $\composedLP$ with the benchmarks by 
computing a near-optimal transportation plan.  
\cref{tab:runtimesetc} gives the details of the result. 
The columns give the string diagram $\sd{D}$, the number of oOTs $\oOT{A}$ in $\sd{D}$, the execution time $t_{\vect{C}}$ for composing cost matrices (\cref{line:compMat} in~\cref{alg:CompOPT}),
the execution times $t_{\mathrm{LP}}$ for the (monolithic) LP (\cref{line:monReduction} in~\cref{alg:CompOPT}), the execution time $t_{\mathrm{Syn}}$ for the synthesis (\cref{alg:SyntHTP}),  the execution time $t$ for the entire algorithms, and the relative error $E$ of minimum transportation costs w.r.t. exact solutions. 
The unit of time is seconds.

\cref{tab:runtimesetc} shows that $\monLP$ beats $\composedLP$ in terms of the speed and the precision for every benchmark. 
In particular, $\monLP$ outperforms $\composedLP$ by orders of magnitude for \texttt{URoom2}, \texttt{UChain1}, and \texttt{UChain2}.

{\bf Q2.} \cref{tab:runtimesetc} clearly indicates that the bottleneck is the computation for cost matrices (\cref{line:compMat} in~\cref{alg:CompOPT}).
For instance, the result of \texttt{BRoom2} shows that the computation of cost matrices takes about $\SI{32}{sec}$ while solving the monolithic OT and synthesis take less than $\SI{1}{sec}$.  
The results imply that efficient representations and computations for matrices over the min-tropical semiring  have a chance to improve performance: this is indeed an interesting future direction. 

{\bf Q3.} We evaluate how the number of compositions affects the performance of the algorithms with the benchmarks $\texttt{BChain}$ and $\texttt{BRooms}$: 
the left figure in \cref{fig:infAlgSt} shows the performance change by increasing the number of the sequential compositions $\seqcomp$ in $\texttt{BChains}$, and the right figure shows that of the parallel compositions $\otimes$ in $\texttt{BRooms}$. 
We can see that $\monLP$ has a clear performance advantage over $\composedLP$ for more complicated structures that are composed with these two algebraic operations.

\section{Related Work}\label{sec:relatedWork}
\paragraph{Optimal Transport with Structures.}
As OT finds increasing applications in machine learning, several challenges have emerged that classical discrete OT cannot address.
One challenge arises when the (discrete) distributions are high-dimensional, and another occurs when there are more than three marginal distributions involved.
The latter is known as the multi-marginal OT (MMOT), which is computationally challenging due to the exponential complexity w.r.t. the number of marginal distributions.
See \cite{Pass2015,jmlr/LinHCJ22}.

To tackle these issues, efficient algorithms have been proposed by assuming certain structures in the distributions or cost matrices, such as hierarchy or symmetry of distributions~\cite{ml/HamriBF22,scalespace/SchmitzerS13,Takeda_Akagi_Marumo_Niwa_2024}, and submodularity of the cost matrix~\cite{aistats/Alvarez-MelisJJ18}.
More recently, algorithms exploiting the hierarchical structure of distributions on graphs have been developed for MMOT in \cite{Zeng_Du_Zhang_Xia_Liu_Tong_2024}.

As a comparison, we propose a planning-motivated variant of OT and discover the new algebraic structure.
%
%
\paragraph{String Diagram.}
String diagrams (e.g.~\cite{mac2013categories,joyal1991geometry,selinger2011survey}) have been widely studied as a graphical expression of monoidal categories. 
There are many applications of string diagrams, such as recurrent neural networks~\cite{SprungerK19}, gradient-based learning~\cite{CruttwellGGWZ22}, quantum computation~\cite{CK2017,AbramskyC04}, and disentanglement in machine learning~\cite{ZhangS23}. 
\paragraph{Hierarchical Reinforcement Learning.}
The assumption of a hierarchical structure in models is popular in reinforcement learning. 
Hierarchical reinforcement learning~\cite{deds/BartoM03,csur/PateriaSTQ21} performs high- (or macro-) and low-level (or micro-level) planning on such hierarchical models. 

\section{Conclusion}
We introduce string diagrams as a hierarchical framework of OTs, and provide a new algorithm that exploits the algebraic structure over cost matrices. 
In experiments, our algorithm outperforms naive LP in all benchmarks.

One of the future works is to study the complexity class of the string diagrams of OTs with choices (without the relaxation).
Another interesting direction is to extend the framework of implicit Sinkhorn differentiation framework developed in, e.g., \cite{cvpr/EisenbergerTLBC22} by exploiting the algebraic structure of the cost matrices.


\bibliographystyle{named}
\bibliography{named}

\iffull

\onecolumn
\appendix

\section{Proof of Strong Duality}
\label{appendix:proofSD}
Let $\vect{I}_{m}\in\R^{m\times m}$ denote the identity matrix, i.e., $\vect{I}_m\defeq\begin{pNiceMatrix}[nullify-dots]
             1 & & \Block[l]{1-1}<\huge>{{0}} \\
             & \Ddots  &  \\
             \Block[r]{1-1}<\huge>{{0}} &  & 1\\
        \end{pNiceMatrix}$.
Note that $\vect{I}_m\neq\idcost{m}=\begin{pNiceMatrix}[nullify-dots]
             0 & & \Block[l]{1-1}<\huge>{{\infty}} \\
             & \Ddots  &  \\
             \Block[r]{1-1}<\huge>{{\infty}} &  & 0\\
        \end{pNiceMatrix}$.

We first recall that strong duality in linear programming, see, e.g., \cite[Theorem 4.4]{bertsimas1997introduction} for a proof. 
\begin{thm}[strong duality in linear programming]
    \label{thm:strong_duality_LP}
    Let $\vect{A}$ be a real-matrix, $\vect{b}$ and $\vect{c}$ be extended-real-vectors with dimensions in the rows and columns of $\vect{A}$, respectively.
    Givem a \emph{primal} problem given as
    \begin{mini*}|l|
        {\substack{\vect{x}}} 
        {\left\langle \vect{c},\vect{x}\right\rangle} 
        {\label{opt:standardPrimal}} 
        {} 
        \addConstraint{\vect{A}\vect{x}}{=\vect{b}} 
        \addConstraint{\vect{x}}{\geq\vect{0},} 
    \end{mini*}
    its \emph{dual} problem is set to be another problem as
    \begin{maxi*}|l|
        {\substack{\vect{p}}} 
        {\left\langle \vect{p},\vect{b}\right\rangle} 
        {\label{opt:standardDual}} 
        {} 
        \addConstraint{\vect{A}^\top\vect{p}}{\leq\vect{c}.} 
    \end{maxi*}
    If there exists a feasible minimizer $\vect{x}^\ast$ in the primal problem, then there also exists a feasible maximizer $\vect{p}^\ast$ in the dual problem.
    Moreover, the minimum value of the primal equals the maximal value of the dual, i.e., $\langle \vect{c},\vect{x}^\ast\rangle=\langle\vect{p}^\ast,\vect{b}\rangle$.
\end{thm}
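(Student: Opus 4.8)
The plan is to reduce the extended-real statement to the classical finite-dimensional linear-programming duality and to prove the latter via a theorem of the alternative (Farkas' lemma). \textbf{Step 1 (reduction to real data).} If some entry $b_j = \infty$, then since $\vect{A}$ is real and $\vect{x}$ ranges over finite nonnegative vectors, $(\vect{A}\vect{x})_j$ is always finite, so the constraint $(\vect{A}\vect{x})_j = b_j$ is never satisfiable and the primal has no feasible point, contradicting the hypothesis; hence $\vect{b}$ is real. If some $c_i = \infty$, then every primal-feasible $\vect{x}$ with $x_i > 0$ has objective $+\infty$, so we may assume the optimal value $z^\ast \defeq \langle\vect{c},\vect{x}^\ast\rangle$ is finite and $x_i^\ast = 0$; deleting the $i$-th column of $\vect{A}$, the $i$-th entry of $\vect{c}$, and the now vacuous $i$-th dual constraint $(\vect{A}^\top\vect{p})_i \le \infty$ yields an equivalent primal--dual pair with all data real. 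From now on $\vect{A},\vect{b},\vect{c}$ are real and $z^\ast \in \R$.

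\textbf{Step 2 (weak duality).} For any primal-feasible $\vect{x}$ and dual-feasible $\vect{p}$,
\[
\langle\vect{p},\vect{b}\rangle = \langle\vect{p},\vect{A}\vect{x}\rangle = \langle\vect{A}^\top\vect{p},\vect{x}\rangle \le \langle\vect{c},\vect{x}\rangle,
\]
using $\vect{A}^\top\vect{p}\le\vect{c}$ and $\vect{x}\ge\vect{0}$. Hence every dual value is $\le z^\ast$, and it suffices to exhibit a single dual-feasible $\vect{p}^\ast$ with $\langle\vect{p}^\ast,\vect{b}\rangle \ge z^\ast$: such a $\vect{p}^\ast$ is then automatically a maximizer and the two optima coincide.

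\textbf{Step 3 (constructing $\vect{p}^\ast$).} Suppose, for contradiction, that the system in the free variable $\vect{p}$ consisting of $\vect{A}^\top\vect{p}\le\vect{c}$ together with $\langle\vect{b},\vect{p}\rangle \ge z^\ast$ is infeasible. Farkas' lemma (the theorem of the alternative for linear inequality systems, which itself follows from the closedness of finitely generated convex cones) then supplies multipliers $\vect{y}\ge\vect{0}$ and $t\ge 0$ with $\vect{A}\vect{y} = t\,\vect{b}$ and $\langle\vect{c},\vect{y}\rangle < t\,z^\ast$. If $t>0$, then $\vect{y}/t$ is primal-feasible with objective strictly below $z^\ast$, contradicting optimality of $\vect{x}^\ast$. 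If $t=0$, then $\vect{A}\vect{y}=\vect{0}$, $\vect{y}\ge\vect{0}$, $\langle\vect{c},\vect{y}\rangle<0$, so $\vect{x}^\ast+s\vect{y}$ is primal-feasible for every $s\ge 0$ with objective $z^\ast + s\langle\vect{c},\vect{y}\rangle\to-\infty$, again contradicting that $\vect{x}^\ast$ is a minimizer. Hence the system is feasible, and any solution is the desired $\vect{p}^\ast$. An equivalent route, which is the one taken in the cited reference, runs the simplex method with an anti-cycling pivoting rule: it terminates at a basic optimal solution with basis $B$, and $\vect{p}^\ast\defeq(\vect{A}_B^\top)^{-1}\vect{c}_B$ is then dual-feasible by nonnegativity of the reduced costs and satisfies $\langle\vect{p}^\ast,\vect{b}\rangle = \langle\vect{c}_B,\vect{x}^\ast_B\rangle = z^\ast$.

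The step I expect to be the genuine obstacle is Step 3's appeal to Farkas' lemma --- equivalently, in the simplex route, guaranteeing finite termination and the existence of an optimal basic feasible solution; everything else is bookkeeping. Since this is a classical result we are content to cite \cite[Theorem 4.4]{bertsimas1997introduction}, but a self-contained argument would proceed by showing the cone $\{\vect{A}\vect{y} : \vect{y}\ge\vect{0}\}$ is closed and separating it from $\vect{b}$ by a hyperplane.
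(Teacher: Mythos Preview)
The paper does not give its own proof of this statement: it is recorded in the appendix as a classical result and immediately referred to \cite[Theorem 4.4]{bertsimas1997introduction}. Your write-up therefore already contains more than the paper does, and your closing remark that one may simply cite that reference is exactly what the paper chooses to do.

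As to the content of your argument: Steps 2 and 3 constitute a correct and standard proof of finite LP strong duality via weak duality plus a Farkas alternative, and the simplex route you sketch is indeed the one in the cited textbook. The only place to be a little careful is Step~1: you argue that if $c_i=\infty$ then ``we may assume the optimal value $z^\ast$ is finite and $x_i^\ast=0$''. This covers the situation actually used in the paper (where $\infty$ entries in the cost matrix are paired with zero entries in the transport plan via the convention $\infty\cdot 0=0$), but as a general reduction it silently excludes the degenerate case in which every primal-feasible $\vect{x}$ is forced to have $x_i>0$ and hence $z^\ast=\infty$. That case does not arise in the paper's applications, so the gap is harmless here, but if you want the statement to stand on its own you should either add a one-line remark disposing of the $z^\ast=\infty$ case separately or add finiteness of the primal optimum as an explicit hypothesis.
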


Based on \cref{thm:strong_duality_LP}, we prove \cref{prop:strong_duality_Seq,prop:strong_duality_Par}.
In the following proof, we will denote by $\vectop\vect{Z}$ the vectorization of a matrix $\vect{Z}$, i.e., for $\vect{Z}\in\R^{m\times n}$,
\[
   \vectop \vect{Z}\defeq\begin{pmatrix}
        Z_{1,1},\dots,Z_{m,1},\dots,Z_{1,n},\dots,Z_{m,n}
    \end{pmatrix}
    ^\top\in\R^{mn}.
\]
It is well known that, for any matrices $\vect{X},\vect{Y},\vect{Z}$ such that the product $\vect{XYZ}$ can be defined,
\begin{equation}
    \label{eq:Roth_formula}
    \vectop(\vect{XYZ})=\left(\vect{Z}^\top\Kron\vect{X}\right)\vectop{\vect{Y}},
\end{equation}
where $\Kron$ is the Kronecker product.
\begin{proof}[Proof of \cref{prop:strong_duality_Seq}]
    Note that $\vect{P}^\oOT{A}=l^{-1}\vect{a}\Kron\vect{1}_l$ and $\vect{P}^\oOT{B}=l^{-1}\vect{1}_l\Kron\vect{b}$ belong to the feasible region of \eqref{align:seqOT}.
    Thus, the region is nonempty and compact, and there exists a minimizer from the Weierstrass theorem~\cite[Theorem 2.12]{Beck2017}.
    
    Using the formula \eqref{eq:Roth_formula}, we can rewrite \eqref{align:seqOT} as
    \begin{mini*}|l|
        {\substack{\vectop{\vect{P}^{\oOT{A}}},\vectop{\vect{P}^{\oOT{B}}}}} 
        {
        \left\langle
        \begin{pmatrix}
            \vectop\vect{C}^{\oOT{A}}\\
            \vectop\vect{C}^{\oOT{B}}
        \end{pmatrix}
        ,
        \begin{pmatrix}
            \vectop\vect{P}^\oOT{A}\\
            \vectop\vect{P}^\oOT{B}
        \end{pmatrix}
        \right\rangle
        } 
        {\label{opt:standardPrimal}} 
        {} 
        \addConstraint{
        \begin{pmatrix}
            \vect{1}_l^\top\Kron\vect{I}_{m}, & 0\\ 
            0, & \vect{I}_{n}\Kron\vect{1}_l^\top\\
            \vect{I}_{l}\Kron\vect{1}_m^\top, & -\vect{1}_n^\top\Kron\vect{I}_{l}
        \end{pmatrix}
        \begin{pmatrix}
            \vectop\vect{P}^\oOT{A}\\
            \vectop\vect{P}^\oOT{B}
        \end{pmatrix}
        }{=
        \begin{pmatrix}
            \vect{a}\\
            \vect{b}\\
            \vect{0}
        \end{pmatrix}
        } 
        \addConstraint{\begin{pmatrix}
            \vectop\vect{P}^\oOT{A}\\
            \vectop\vect{P}^\oOT{B}
        \end{pmatrix}}{\geq\vect{0}.} 
    \end{mini*}
    Therefore, \cref{thm:strong_duality_LP} yields the dual problem as
    \begin{maxi!}|l|
        {\substack{\vect{f},\vect{g},\vect{h}}} 
        {\left\langle
        \begin{pmatrix}
            \vect{f}\\
            \vect{g}\\
            \vect{h}
        \end{pmatrix}
        ,
        \begin{pmatrix}
            \vect{a}\\
            \vect{b}\\
            \vect{0}
        \end{pmatrix}
        \right\rangle} 
        {\label{opt:SeqDual}} 
        {} 
        \addConstraint{
        \begin{pmatrix}
            \vect{1}_l\Kron\vect{I}_{m} & 0  & \vect{I}_{l}\Kron\vect{1}_m\\ 
            0 & \vect{I}_{n}\Kron\vect{1}_l & -\vect{1}_n\Kron\vect{I}_{l}
        \end{pmatrix}
        \begin{pmatrix}
            \vect{f}\\
            \vect{g}\\
            \vect{h}
        \end{pmatrix}
        }{\leq
        \begin{pmatrix}
            \vectop\vect{C}^{\oOT{A}}\\
            \vectop\vect{C}^{\oOT{B}}
        \end{pmatrix}
        .
        \label{eq:matrix_form_fgh}
        } 
        \end{maxi!}
        Note that the objective function \eqref{opt:SeqDual} does not depend on $\vect{h}$.
        Using \eqref{eq:Roth_formula} again, the constraint \eqref{eq:matrix_form_fgh} can be rewritten as 
        \begin{align*}
            \vect{f1}_l^\top+\vect{1}_m\vect{h}^\top&\leq\vect{C}^{\oOT{A}},\\
            \vect{1}_l\vect{g}^\top - \vect{h}\vect{1}_n^\top&\leq\vect{C}^{\oOT{B}},
        \end{align*}
        which is written componentwisely as
        \[
           g_j-C_{lj}^{\oOT{B}} \leq h_l\leq C^{\oOT{A}}_{il}-f_i,
        \]
        for any $i\in[m]$, $k\in[l]$, and $m\in[n]$.
        By applying Fourier--Motzkin elimination to project out $\vect{h}$, we get \cref{align:constDualSeq}.
\end{proof}

\begin{proof}[Proof of \cref{prop:strong_duality_Par}]
    It is immediate that the minimum of \eqref{prob:ParOT} can be expressed as the sum of the minimums of the following two problems:
    \begin{equation}
    \begin{array}{cl}
            \min\limits_{\substack{\vect{P}^{\oOT{A}}\in \Rnneg^{m\times n}}} & \langle \vect{C}^{\oOT{A}}, \vect{P}^{\oOT{A}} \rangle  \\
        \text{s.t.} & \begin{aligned}
                             &\vect{P}^{\oOT{A}}\vect{1}_{n} = (a_i)_{i=1}^m, (\vect{P}^{\oOT{A}})^{\top}\vect{1}_{m} = (b_j)_{j=1}^n,
                        \end{aligned}
       
    \end{array}
    \label{prob:ParOT_C}
    \end{equation}
    \begin{equation}
    \begin{array}{cl}
            \min\limits_{\substack{
            \vect{P}^{\oOT{B}}\in \Rnneg^{k\times l}}} & \langle \vect{C}^{\oOT{B}}, \vect{P}^{\oOT{B}}\rangle \\
        \text{s.t.} & \begin{aligned}
                            &\vect{P}^{\oOT{B}}\vect{1}_{l} = (a_i)_{i=m+1}^{m+k}, (\vect{P}^{\oOT{B}})^{\top}\vect{1}_{k} = (b_j)_{j=n+1}^{n+l}.
                        \end{aligned}
       
    \end{array}
    \label{prob:ParOT_D}
    \end{equation}
    By the assumption $\sum^{m}_{i=1} a_i = \sum^{n}_{j=1}  b_j$, \eqref{prob:ParOT_C} and \eqref{prob:ParOT_D} can be viewed as optimal transport problems, respectively.
    Therefore, the duality of optimal transport (see, e.g., \cite[Proposition 2.4]{ftml/PeyreC19}) can be applied to both problems \eqref{prob:ParOT_C} and \eqref{prob:ParOT_D}.
\end{proof}

We finally provide a proof of \cref{thm:strong_duality}.

\begin{proof}[Proof of \cref{thm:strong_duality}]
Recall that the string diagram $\sd{D}$ is given by \cref{def:aligned_string_diagram}.
We prove the duality by induction on $H$.
\paragraph{(I) Base step: $\boldsymbol{H=1}$.} Given a positive integer $l$ and an aligned string diagram $\sd{D}_1\defeq\oOT{A}\seqcomp (\oOT{B}_{1}\otimes \dots \otimes \oOT{B}_{l})$, let us consider the problem
    \begin{mini}|l|
        {\substack{\vect{P}^{\oOT{A}}, (\vect{P}^{\oOT{B}_{i}})_{i=1}^l}} 
        {\langle \vect{C}^{\oOT{A}}, \vect{P}^{\oOT{A}} \rangle +  \sum_{i=1}^l\langle \vect{C}^{\oOT{B}_{i}}, \vect{P}^{\oOT{B}_{i}}\rangle} 
        {\label{opt:basePrimal}} 
        {} 
        \addConstraint{\vect{P}^{\oOT{A}}\vect{1}_{n^{\oOT{A}}} }{= \vect{a}} 
        \addConstraint{(\vect{P}^{\oOT{A}})^{\top}\vect{1}_{m^{\oOT{A}}}}{= (\vect{P}^{\oOT{B}})\vect{1}_{n^{\oOT{B}}}} 
        \addConstraint{(\vect{P}^{\oOT{B}})^\top\vect{1}_{m^{\oOT{B}}}}{=\vect{b},}
    \end{mini}
where
    \[
        \vect{P}^{\oOT{B}}\defeq 
        \begin{pNiceMatrix}[nullify-dots]
             \vect{P}^{\oOT{B}_{1}} & & \Block[c]{1-1}<\Huge>{{0}} \\
             & \Ddots  &  \\
             \Block[c]{1-1}<\Huge>{{0}} &  &   \vect{P}^{\oOT{B}_{l}}\\
        \end{pNiceMatrix}
         \in\Rnneg^{m^{\oOT{B}}\times n^{\oOT{B}}},
    \]
    and $m^{\oOT{B}} \defeq \sum_{i\in [l]} m^{\oOT{B}_{i}}$ and $n^{\oOT{B}} \defeq \sum_{i\in [l]} n^{\oOT{B}_{i}}$.
    It is obvious that 
    \[
    \begin{aligned}
        \vect{P}^\oOT{A}&=\vect{a}\Kron
        \begin{pmatrix}
            \left(b^{\oOT{B}_1}\middle/m^{\oOT{B}_1}\right)\vect{1}_{m^{\oOT{B}_1}}\\
            \vdots\\
            \left(b^{\oOT{B}_l}\middle/m^{\oOT{B}_l}\right)\vect{1}_{m^{\oOT{B}_l}}
        \end{pmatrix}, \\
        \vect{P}^{\oOT{B}_i}&=
        \begin{pNiceMatrix}[nullify-dots]
             b_{1+\sum_{k=1}^{i-1}n^{\oOT{B}_k}} & & \Block[c]{1-1}<\Huge>{{0}} \\
             & \Ddots  &  \\
             \Block[c]{1-1}<\Huge>{{0}} &  &   b_{n^{\oOT{B}_i}+\sum_{k=1}^{i-1}n^{\oOT{B}_k}}\\
        \end{pNiceMatrix}
    \end{aligned}
    \]
    is a feasible solution where
    \[
        {b}^{\oOT{B}_i}\defeq\sum_{j=1+\sum _{k=1}^{i-1}n^{\oOT{B}_k}}^{\sum _{k=1}^{i}n^{\oOT{B}_k}}b_j,
    \]
    for $i\in[l]$.
    By \cref{prop:ParOT_CotimesD}, the objective function of \eqref{opt:basePrimal} can be rewritten as
    \begin{equation}
        \label{eq:extended_objective}
        \langle \vect{C}^{\oOT{A}}, \vect{P}^{\oOT{A}} \rangle +  \langle \vect{C}^{\oOT{B}_{1}\otimes \dots \otimes \oOT{B}_{l}}, \vect{P}^{\oOT{B}}\rangle.
    \end{equation}
    Note that the off-block-diagonal elements of $\vect{C}^{\oOT{B}_{1}\otimes \dots \otimes \oOT{B}_{l}}$ are $\infty$, but there exists a feasible solution which belongs to the effective domain of \eqref{eq:extended_objective}.
    thus problem \eqref{opt:basePrimal} is equivalent to $\SeqOT{\oOT{A}}{\oOT{B}_{1}\otimes \dots \otimes \oOT{B}_{l}}{\vect{a}}{\vect{b}}$.
    In a similar manner to the proof of \cref{prop:strong_duality_Seq}, we can show that the dual of \eqref{opt:basePrimal} is given by the dual of $\OT{\vect{C}^{\sd{D}_1}}{\vect{a}}{\vect{b}}$, which is equivalent to the dual of $\OT{\sd{D}_1}{\vect{a}}{\vect{b}}$ defined by \cref{def:dualcSD}.

\paragraph{(II) Inductive step:}

Suppose that there exists a positive integer $H$ such that, for any aligned string diagrams with the form of \eqref{eq:aligned_string_diagram}.
Let us consider another diagram $\sd{D}_{H+1}$ with the form of 
\[
    \oOT{A}\seqcomp (\oOT{B}_{11}\otimes \dots \otimes \oOT{B}_{1l_1})\seqcomp \dots \seqcomp (\oOT{B}_{(H+1)1}\otimes \dots \otimes \oOT{B}_{(H+1)l_{H+1}}),
\]
and the corresponding problem $\OT{\sd{D}_{H+1}}{\vect{a}}{\vect{b}}$
    \begin{mini}|l|
        {\substack{\vect{P}^{\oOT{A}},((\vect{P}^{\oOT{B}_{ji}})_{i=1}^{l_j})_{j=1}^{H+1}}} 
        {\langle \vect{C}^{\oOT{A}}, \vect{P}^{\oOT{A}} \rangle +  \sum\limits_{j,i}\langle \vect{C}^{\oOT{B}_{ji}}, \vect{P}^{\oOT{B}_{ji}}\rangle} 
        {\label{opt:basePrimal_D_H+1}} 
        {} 
        \addConstraint{\vect{P}^{\oOT{A}}\vect{1}_{n^{\oOT{A}}} }{= \vect{a}} 
        \addConstraint{(\vect{P}^{\oOT{B}_{j}})^{\top}\vect{1}_{m^{\oOT{B}_{j}}}}{= (\vect{P}^{\oOT{B}_{j+1}})\vect{1}_{n^{\oOT{B}_{j+1}}}}{(j\in[H])}
        \addConstraint{(\vect{P}^{\oOT{A}})^{\top}\vect{1}_{m^{\oOT{A}}}}{= (\vect{P}^{\oOT{B}_1})\vect{1}_{n^{\oOT{B}_1}}} 
        \addConstraint{(\vect{P}^{\oOT{B}_{H+1}})^\top\vect{1}_{m^{\oOT{B}_{H+1}}}}{=\vect{b}.}
    \end{mini}
Set
\(
    \sd{D}_H=\oOT{A}\seqcomp (\oOT{B}_{11}\otimes \dots \otimes \oOT{B}_{1l_1})\seqcomp \dots \seqcomp (\oOT{B}_{H1}\otimes \dots \otimes \oOT{B}_{Hl_{H}}),
\)
and introduce a slack variable $\vect{u}$ to rewrite \eqref{opt:basePrimal_D_H+1} as     \begin{mini*}|l|
        {\substack{\vect{P}^{\oOT{A}},((\vect{P}^{\oOT{B}_{ji}})_{i=1}^{l_j})_{j=1}^{H+1},\vect{u}}} 
        {\langle \vect{C}^{\oOT{A}}, \vect{P}^{\oOT{A}} \rangle +  \sum\limits_{j,i}\langle \vect{C}^{\oOT{B}_{ji}}, \vect{P}^{\oOT{B}_{ji}}\rangle} 
        {\label{opt:slackPrimal_D_H+1}} 
        {} 
        \addConstraint{\vect{P}^{\oOT{A}}\vect{1}_{n^{\oOT{A}}} }{= \vect{a}} 
        \addConstraint{(\vect{P}^{\oOT{A}})^{\top}\vect{1}_{m^{\oOT{A}}}}{= (\vect{P}^{\oOT{B}_1})\vect{1}_{n^{\oOT{B}_1}}} 
        \addConstraint{(\vect{P}^{\oOT{B}_{j}})^{\top}\vect{1}_{m^{\oOT{B}_{j}}}}{= (\vect{P}^{\oOT{B}_{j+1}})\vect{1}_{n^{\oOT{B}_{j+1}}}}{(j\in[H-1])}
        \addConstraint{(\vect{P}^{\oOT{B}_{H}})^\top\vect{1}_{m^{\oOT{B}_{H}}}}{=\vect{u}}
        \addConstraint{(\vect{P}^{\oOT{B}_{H+1}})\vect{1}_{n^{\oOT{B}_{H+1}}}}{=\vect{u}}
        \addConstraint{(\vect{P}^{\oOT{B}_{H+1}})^\top\vect{1}_{m^{\oOT{B}_{H+1}}}}{=\vect{b},}
    \end{mini*}
    which is equivalent to
    \begin{mini*}|l|
        {\substack{(\vect{P}^{\oOT{B}_{(H+1)i}})_{i=1}^{l_{H+1}},\vect{u}}} 
        {\OT{\sd{D}_H}{\vect{a}}{\vect{u}} +  \sum_{i=1}^{l_{H+1}}\langle \vect{C}^{\oOT{B}_{(H+1)i}}, \vect{P}^{\oOT{B}_{(H+1)i}}\rangle} 
        {\label{opt:slackPrimal_D_H+1_2}} 
        {} 
        \addConstraint{(\vect{P}^{\oOT{B}_{H+1}})\vect{1}_{n^{\oOT{B}_{H+1}}}}{=\vect{u}}
        \addConstraint{(\vect{P}^{\oOT{B}_{H+1}})^\top\vect{1}_{m^{\oOT{B}_{H+1}}}}{=\vect{b}.}
    \end{mini*}
By the hypothesis of the induction, we can further rewrite the problem as
    \begin{mini*}|l|
        {\substack{\vect{P}^{\sd{D}_H},(\vect{P}^{\oOT{B}_{(H+1)i}})_{i=1}^{l_{H+1}}}} 
        {\left\langle\vect{C}^{\sd{D}_H},\vect{P}^{\sd{D}_H}\right\rangle +  \sum_{i=1}^{l_{H+1}}\langle \vect{C}^{\oOT{B}_{(H+1)i}}, \vect{P}^{\oOT{B}_{(H+1)i}}\rangle} 
        {} 
        {} 
        \addConstraint{(\vect{P}^{\sd{D}_{H}})\vect{1}_{n^{\oOT{A}}}}{=\vect{a}}
        \addConstraint{(\vect{P}^{\sd{D}_{H}})^\top\vect{1}_{m^{\oOT{B}_{H+1}}}}{=(\vect{P}^{\oOT{B}_{H+1}})\vect{1}_{n^{\oOT{B}_{H+1}}}}
        \addConstraint{(\vect{P}^{\oOT{B}_{H+1}})^\top\vect{1}_{m^{\oOT{B}_{H+1}}}}{=\vect{b}.}
    \end{mini*}
    In the same way as the base step {\bf{(I)}}, we can finally see that \eqref{opt:basePrimal_D_H+1} is equivalent to $\SeqOT{\vect{C}^{\sd{D}_H}}{\oOT{B}_{H1}\otimes \dots \otimes \oOT{B}_{Hl_{H}}}{\vect{a}}{\vect{b}}$, or $\OT{\vect{C}^{\sd{D}_H\seqcomp(\oOT{B}_{H1}\otimes \dots \otimes \oOT{B}_{Hl_{H}})}}{\vect{a}}{\vect{b}}$.
    Therefore, the dual of $\OT{\sd{D}_{H+1}}{\vect{a}}{\vect{b}}$ is given by the dual of $\OT{\vect{C}^{{\sd{D}_{H+1}}}}{\vect{a}}{\vect{b}}$.
\end{proof}

\section{Symmetric Strict Monoidal Category}
\label{appendix:SSMC}

We refer to~\cite {mac2013categories} as a reference. 
We recall \emph{symmetric strict monoidal categories (SSMC)}:
\begin{defn}[symmetric strict monoidal category]
    A \emph{symmetric strict monoidal category (SSMC)} is a category 
    $(\mathfrak{C}, \seqcomp)$ equipped with the \emph{unit} $0\in \ob{\mathfrak{C}}$,
    the bifunctor  $\otimes \colon\mathfrak{C}\times \mathfrak{C}\rightarrow \mathfrak{C}$,
    and the natural isomorphism $\sigma_{m, n}\colon m\otimes n \rightarrow n\otimes m$ for each objects $m, n\in \ob{\mathfrak{C}}$ such that the following 
    conditions hold:
    \begin{itemize}
        \item $l\otimes (m\otimes n) = (l\otimes m)\otimes n$,
        \item $0\otimes m = m = m\otimes 0$, 
        \item $\oOT{A}\otimes (\oOT{B}\otimes \oOT{C}) = (\oOT{A}\otimes \oOT{B})\otimes \oOT{C}$,
        \item $\id{0}\otimes \oOT{A} = \oOT{A} = \oOT{A}\otimes \id{0}$,
        \item $\sigma_{m, 0} = \id{m}$, $\sigma_{m, n}\seqcomp \sigma_{n, m} = \id{m\otimes n}$,
        \item $\sigma_{l, m\otimes n} = (\sigma_{l, m}\otimes \id{n})\seqcomp (\id{m}\otimes \sigma_{l, n})$,
    \end{itemize}
    for any objects $l, m, n\in \ob{\mathfrak{C}}$, and morphisms $\oOT{A}, \oOT{B}, \oOT{C}$. 
\end{defn}

\begin{proof}[Proof of~\cref{prop:SSMC}]
It has been well-known that matrices over a commutative semiring form a \emph{PROP}~\cite{maclane1965categorical,Zanasi15}, which is a subclass of SSMCs.
Since cost matrices are composed over the min-tropical semiring, we can conclude that they form an SSMC. 
    
\end{proof}

\begin{lemma} 
    \label{lem:bifunctoriality}
    For any morphisms $\oOT{A}\colon m_1 \rightarrow l_1$, 
    $\oOT{B}\colon l_1 \rightarrow n_1$, $\oOT{C}\colon m_2 \rightarrow l_2$, $\oOT{D}\colon l_2 \rightarrow n_2$, 
    the following equation holds: 
    \begin{align*}
        (\oOT{A}\seqcomp \oOT{B})\otimes (\oOT{C}\seqcomp \oOT{D}) = (\oOT{A}\otimes \oOT{C})\seqcomp (\oOT{B}\otimes \oOT{D}). 
    \end{align*}   
\end{lemma}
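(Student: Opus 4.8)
The plan is to observe first that \cref{lem:bifunctoriality} is precisely the bifunctoriality of $\otimes$, which is already subsumed by the SSMC structure granted by \cref{prop:SSMC} (indeed, ``$\otimes$ is a bifunctor'' is one of the axioms of a strict monoidal category, and with the left-to-right composition convention $\seqcomp$ it reads exactly $(\oOT{A}\seqcomp\oOT{B})\otimes(\oOT{C}\seqcomp\oOT{D}) = (\oOT{A}\otimes\oOT{C})\seqcomp(\oOT{B}\otimes\oOT{D})$). So a one-line proof---``bifunctoriality of $\otimes$, by \cref{prop:SSMC}''---already suffices. To keep the development self-contained, however, I would also give the short direct computation on cost matrices, which makes transparent \emph{why} the $\infty$-padded parallel composition is compatible with tropical matrix multiplication.

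Concretely, write $\vect{A}\defeq\vect{C}^{\oOT{A}}\in\Rinf^{m_1\times l_1}$ and likewise $\vect{B},\vect{C},\vect{D}$ for the cost matrices of $\oOT{B},\oOT{C},\oOT{D}$, and recall that $(\vect{X}\seqcomp\vect{Y})_{ij}=\min_k(X_{ik}+Y_{kj})$ with the convention $\infty+r=\infty$. I would expand both sides as matrices of size $(m_1+m_2)\times(n_1+n_2)$. By definition the left-hand side is block diagonal: its top-left $m_1\times n_1$ block is $\vect{A}\seqcomp\vect{B}$, its bottom-right $m_2\times n_2$ block is $\vect{C}\seqcomp\vect{D}$, and all remaining entries are $\infty$. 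For the right-hand side, $\vect{A}\otimes\vect{C}$ is block diagonal of size $(m_1+m_2)\times(l_1+l_2)$ and $\vect{B}\otimes\vect{D}$ is block diagonal of size $(l_1+l_2)\times(n_1+n_2)$, so that
\begin{align*}
  \big((\vect{A}\otimes\vect{C})\seqcomp(\vect{B}\otimes\vect{D})\big)_{ij}
  = \min_{k\in[l_1+l_2]}\big((\vect{A}\otimes\vect{C})_{ik} + (\vect{B}\otimes\vect{D})_{kj}\big).
\end{align*}

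Then I would run a four-way case split on whether $i\in[m_1]$ or $i\in[m_1+1,m_1+m_2]$ and whether $j\in[n_1]$ or $j\in[n_1+1,n_1+n_2]$. In the two ``diagonal'' cases (say $i\in[m_1]$, $j\in[n_1]$), the factor $(\vect{A}\otimes\vect{C})_{ik}$ equals $\infty$ for every $k>l_1$, so the minimum collapses to $\min_{k\in[l_1]}(A_{ik}+B_{kj})=(\vect{A}\seqcomp\vect{B})_{ij}$, exactly the matching block of the left-hand side; the case with $i,j$ both in the second blocks is symmetric, via the index shifts $i\mapsto i-m_1$, $k\mapsto k-l_1$, $j\mapsto j-n_1$. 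In the two ``off-diagonal'' cases, for every $k\in[l_1+l_2]$ at least one of the two factors is $\infty$ (if $k\le l_1$ then $(\vect{B}\otimes\vect{D})_{kj}=\infty$ whenever $j>n_1$; if $k>l_1$ then $(\vect{A}\otimes\vect{C})_{ik}=\infty$ whenever $i\le m_1$), so the minimum is $\infty$, again matching the left-hand side.

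There is no real obstacle here: the only thing demanding care is the index bookkeeping for the second blocks together with the convention $\infty+r=\infty$, and conceptually the point is just that the $\infty$ padding in $\otimes$ forbids any ``crossing'' path in the tropical product. This is exactly what one expects, since the statement is a monoidal-category axiom already covered by \cref{prop:SSMC}; the direct proof is included only for readability.
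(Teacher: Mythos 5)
Your proposal is correct and matches the paper's proof, which is exactly the one-line appeal to the bifunctoriality of $\otimes$ guaranteed by the SSMC/PROP structure of \cref{prop:SSMC}. The additional block-diagonal computation over the min-tropical semiring, with the four-way case split and the observation that the $\infty$ padding kills all ``crossing'' paths, is a correct (if optional) verification that goes beyond what the paper writes down.
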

\begin{proof}
By definition of the bifunctor $\otimes$. 
\end{proof}

\begin{proof}[Proof of~\cref{prop:SDToCSD}]
    We present a reduction from a given string diagram $\sd{D}_1$ to 
    to a string diagram $\sd{D}_2$ that is sequential normal form.
    We define the reduction by the structural induction.  
    \paragraph{(I) Base case: $\sd{D}_1 = \oOT{A}$:} The string diagram $\sd{D}_1$ is already sequential normal form. 
    \paragraph{(II) Base case: $\sd{D}_1 = (n, n, \idcost{n})$:} The string diagram $\sd{D}_1$ is already sequential normal form. 
    \paragraph{(III) Step case: $\sd{D}_1 = \sd{D}\seqcomp \sd{D}'$:} By the hypothesis of the induction, 
    we assume that $\sd{D}$ and $\sd{D}'$ are sequential normal form. Then, $\sd{D}_1$ is also in sequential normal form. 
    \paragraph{(IV) Step case: $\sd{D}_1 = \sd{D}\otimes \sd{D}'$:} By the hypothesis of the induction, we assume that $\sd{D}$ and $\sd{D}'$ are sequential normal form. 
    Suppose that $\sd{D} \defeq (\oOT{B}_{11}\otimes \dots \otimes \oOT{B}_{1l_1})\seqcomp \dots \seqcomp (\oOT{B}_{H1}\otimes \dots \otimes \oOT{B}_{Hl_H})$ and $\sd{D}' \defeq (\oOT{B}'_{11}\otimes \dots \otimes \oOT{B}'_{1l'_1})\seqcomp \dots \seqcomp (\oOT{B}'_{H'1}\otimes \dots \otimes \oOT{B}'_{H'l'_{H'}})$. 
    We can assume that $H = H'$ because we can adjust the length by sequentially composing with identities. 
    Now, let $\sd{D} = (\oOT{B}_{11}\otimes \dots \otimes \oOT{B}_{1l_1})\seqcomp \sd{E}$ and $\sd{D}' = (\oOT{B}'_{11}\otimes \dots \otimes \oOT{B}_{1l'_1})\seqcomp \sd{E}'$. 
    By~\cref{lem:bifunctoriality}, we can say that the cost matrices $\sd{D}\otimes \sd{D}'$ and $ (\oOT{B}_{11}\otimes \dots \otimes \oOT{B}_{1l_1}\otimes \oOT{B}'_{11}\otimes \dots \otimes \oOT{B}_{1l'_1})\seqcomp (\sd{E}\otimes \sd{E}')$ are the same. 
    By continuing this process, we can get the sequential normal form $(\oOT{B}_{11}\otimes \dots \otimes \oOT{B}'_{1l'_1})\seqcomp \dots \seqcomp (\oOT{B}_{H1}\otimes \dots \otimes \oOT{B}'_{H'l'_{H'}})$. 

    By definition, the reduction preserves the cost matrix, concluding that the string diagram $\sd{D}_2$ that is sequential normal form constructed by the reduction satisfies that  $\vect{C}^{\oOT{A}\seqcomp \sd{D}_1} = \vect{C}^{\oOT{A}\seqcomp\sd{D}_2}$. 
\end{proof}

We remark that a simpler and stronger statement of~\cref{prop:SDToCSD} holds by the same argument if we rely on category theory.  
In fact,  the proof of~\cref{prop:SDToCSD} also proves the following stronger statement; see e.g.~\cite{Zanasi15} for a reference of \emph{free PROPs} generated by \emph{symmetric monoidal theories}. 
\begin{prop} 
    Let $\mathfrak{C}$ be the free PROP generated by oOTs.
    Given an oOT $\oOT{A}\colon m\rightarrow l$ and a string diagram $\sd{D}_1\colon l\rightarrow n$, 
    there is an aligned string diagram $\oOT{A}\seqcomp \sd{D}_2$ such that (i) $\sd{D}_1 = \sd{D}_2$ holds in $\mathfrak{C}$; 
    (ii) $\sd{D}_2$ is sequential normal form; (iii) the equation $\vect{C}^{\oOT{A}\seqcomp \sd{D}_1} = \vect{C}^{\oOT{A}\seqcomp\sd{D}_2}$ holds. 
\end{prop}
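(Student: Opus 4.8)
The plan is to rerun the structural induction from the proof of~\cref{prop:SDToCSD}, but this time recording every rewrite as an equality of morphisms in the free PROP $\mathfrak{C}$; claim~(iii) will then follow formally. First I would invoke the universal property of $\mathfrak{C}$: since the cost matrices over the min-tropical semiring form an SSMC by~\cref{prop:SSMC}, the assignment sending each generating oOT $\oOT{A}\colon m\to n$ to $\vect{C}^{\oOT{A}}\colon m\to n$ extends uniquely to a strict symmetric monoidal functor $\semantics{-}\colon\mathfrak{C}\to(\text{cost matrices})$, and by construction $\semantics{\sd{D}}=\vect{C}^{\sd{D}}$ for every string diagram $\sd{D}$. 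Hence, once $\sd{D}_1=\sd{D}_2$ is proved in $\mathfrak{C}$, applying $\semantics{-}$ and then precomposing with $\oOT{A}$ yields $\vect{C}^{\oOT{A}\seqcomp\sd{D}_1}=\vect{C}^{\oOT{A}\seqcomp\sd{D}_2}$, so~(iii) is reduced to~(i) and~(ii).

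For~(i) and~(ii) I would induct on the structure of $\sd{D}_1$. In the base cases $\sd{D}_1=\oOT{A}$ and $\sd{D}_1=(n,n,\idcost{n})$ the diagram is already sequential normal form and we set $\sd{D}_2\defeq\sd{D}_1$. If $\sd{D}_1=\sd{D}\seqcomp\sd{D}'$, the induction hypothesis supplies sequential normal forms equal in $\mathfrak{C}$ to $\sd{D}$ and $\sd{D}'$; concatenating their layers (using associativity of $\seqcomp$) produces a sequential normal form equal to $\sd{D}_1$ in $\mathfrak{C}$. The one substantive case is $\sd{D}_1=\sd{D}\otimes\sd{D}'$: after replacing $\sd{D},\sd{D}'$ by their normal forms $(\oOT{B}_{11}\otimes\dots)\seqcomp\dots\seqcomp(\oOT{B}_{H1}\otimes\dots)$ and $(\oOT{B}'_{11}\otimes\dots)\seqcomp\dots\seqcomp(\oOT{B}'_{H'1}\otimes\dots)$, I would pad the shorter one with layers of identities --- legitimate in $\mathfrak{C}$ via $\id{k}\seqcomp\sd{E}=\sd{E}$ and $\id{k}\otimes\id{k'}=\id{k+k'}$ --- to arrange $H=H'$, and then apply the interchange law of~\cref{lem:bifunctoriality} layer by layer to push $\otimes$ inward, arriving at $(\oOT{B}_{11}\otimes\dots\otimes\oOT{B}'_{1l'_1})\seqcomp\dots\seqcomp(\oOT{B}_{H1}\otimes\dots\otimes\oOT{B}'_{Hl'_{H}})$, which is sequential normal form. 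Every step is a defining equation of a PROP, so $\sd{D}_1=\sd{D}_2$ holds in $\mathfrak{C}$; since the original diagram and all the rewrites are braiding-free, the equality already holds in the free strict monoidal category on the oOTs, a fortiori in $\mathfrak{C}$.

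The main obstacle is the bookkeeping in the $\otimes$-case: one must match the wire counts at every layer when interleaving the two normal forms and their identity-padding, and iterate~\cref{lem:bifunctoriality} carefully so all the nested parentheses collapse to the single $H$-layer normal form. A further detail to check is that the result is genuinely \emph{aligned} in the sense of~\cref{def:aligned_string_diagram}: the leftmost component stays the single oOT $\oOT{A}$ (the induction only rewrites the part after $\oOT{A}$, so this invariant is preserved), and the no-deadend condition is inherited from $\oOT{A}\seqcomp\sd{D}_1$ since padding only inserts identities $(k,k,\idcost{k})$ with $k\ge 1$. None of this is mathematically deep --- each move is either a PROP axiom or~\cref{lem:bifunctoriality} --- but the chain of equalities must be spelled out with care.
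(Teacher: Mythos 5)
Your proposal is correct and follows essentially the same route as the paper: the identical structural induction (identity-padding in the $\otimes$ case plus the interchange law of \cref{lem:bifunctoriality}) used for \cref{prop:SDToCSD}, with the observation that every rewrite is a PROP axiom and hence an equality in $\mathfrak{C}$. The only cosmetic difference is that you obtain (iii) from (i) via the universal property of the free PROP (evaluating with the strict symmetric monoidal functor into the SSMC of cost matrices from \cref{prop:SSMC}), whereas the paper notes directly that each rewrite preserves the cost matrix---the same fact phrased without the functor.
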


\section{Relaxation}
\label{sec:relaxation}

We show an example such that the optimal value~\cref{def:relaxOtsChoices} is strictly greater than the one of~\cref{def:otsChoices}. 
\begin{itemize}
\item the two distributions are the uniform distributions $\vect{a} = \vect{b} = (1/3, 1/3, 1/3)$. 
\item the OT is given by $\OT{\oOT{A}_1\seqcomp \oOT{A}_2}{\vect{a}}{\vect{b}}$ where $\oOT{A}_1$ and $\oOT{A}_2$ share the same set of cost matrices $\{\vect{C}_1, \vect{C}_2\}$ given by
\begin{align*}
    \vect{C}_1 &\defeq\begin{pmatrix}
            15 & 12  & 4 \\
            9 &  6   & 10 \\ 
            4 &  9   & 14
        \end{pmatrix},
    &\vect{C}_2 \defeq\begin{pmatrix}
            6  & 12   & 5 \\
            1  &  4   & 7 \\ 
            17 & 11   & 12
        \end{pmatrix}.
\end{align*}
\end{itemize}

\section{Benchmark Details}
\label{appendix:benchmarkDetails}

We show the details of the four benchmarks $\texttt{BRooms}$, $\texttt{URooms}$, $\texttt{BChains}$, $\texttt{UChains}$. 
All cost matrices in benchmarks are randomly generated, whose elements $C_{ij}$ are  $C_{ij}\in [0, 10^6]$.
String diagrams of each benchmark are illustrated as follows. 
\begin{description}
    \item[$\texttt{BRooms}$:] $\oOT{A}\seqcomp (\oOT{B}_{11}\otimes \dots \otimes \oOT{B}_{1l_1})\seqcomp \dots \seqcomp (\oOT{B}_{H1}\otimes \dots \otimes \oOT{B}_{Hl_H})\seqcomp \oOT{C}$, 
    \item[$\texttt{URooms}$:] $\oOT{A}\seqcomp (\oOT{B}_{11}\otimes \dots \otimes \oOT{B}_{1l_1})\seqcomp (\oOT{C}_{11}\otimes \dots \otimes \oOT{C}_{1l_1})\seqcomp \dots \seqcomp (\oOT{B}_{H1}\otimes \dots \otimes \oOT{B}_{Hl_{H}})\seqcomp \oOT{D}$, 
    \item[$\texttt{BChains}$:] $\oOT{A}_1\seqcomp\dots \seqcomp \oOT{A}_H $
    \item[$\texttt{UChains}$:] $\oOT{A}_1\seqcomp \oOT{B}_1\dots \seqcomp \oOT{A}_H \seqcomp \oOT{B}_H$
\end{description}
Each benchmark has multiple string diagrams by changing cost matrices and their sizes.  
We set two distributions $\vect{a} \in \Delta^m$ and $\vect{b} \in \Delta^n$ as uniform distributions. 


The benchmarks $\texttt{BRooms}$ and $\texttt{BChains}$ are \emph{balanced}:
for each string diagram in $\texttt{BRooms}$, the sizes of $(\vect{C}^{\oOT{B}_{11}}\otimes \dots \otimes \vect{C}^{\oOT{B}_{1l_1}}),\dots ,(\vect{C}^{\oOT{B}_{H1}}\otimes \dots \otimes \vect{C}^{\oOT{B}_{Hl_H}})$ are all same and $N\times N$ matrices for some $N\in \N$, and 
for each string diagram in $\texttt{BChains}$, the sizes of $\vect{C}^{\oOT{A}_1},\dots, \vect{C}^{\oOT{A}_H}$ are all same as well. 

On the other hand, the benchmarks $\texttt{URooms}$ and $\texttt{UChains}$ are \emph{unbalanced}:
for each string diagram in $\texttt{URooms}$, the sizes of $(\vect{C}^{\oOT{B}_{i1}}\otimes \dots \otimes \vect{C}^{\oOT{B}_{il_i}})$ and $(\vect{C}^{\oOT{C}_{i1}}\otimes \dots \otimes \vect{C}^{\oOT{C}_{il_i}})$ are different and $M\times N$ and $N\times M$ matrices for some $M, N\in \N$, and 
for each string diagram in $\texttt{Uhains}$, the sizes of $\vect{C}^{\oOT{A}_i}$ and $\vect{C}^{\oOT{B}_i}$ are different as well. 

We show the further details for each benchmarks shown in~\cref{tab:runtimesetc} and~\cref{fig:infAlgSt}. 
\subsection{Full Detail of The Benchmarks in \texorpdfstring{\cref{tab:runtimesetc}}{Table:runtimesetc}}
\paragraph{\texttt{BRoom1}.} The string diagram is given by  $\oOT{A}\seqcomp (\oOT{B}_{1,1}\otimes \oOT{B}_{1,2})\seqcomp \dots \seqcomp (\oOT{B}_{99,1} \otimes \oOT{B}_{99,2})\seqcomp \oOT{C}\colon 100\rightarrow 100$, where 
$\oOT{A}\colon 100\rightarrow 100$, $\oOT{C}\colon 100\rightarrow 100$, and $\oOT{B}_{i,j}$ is given by (i) $\oOT{B}_{i,1}\colon 30 \rightarrow 30$ if $i$ is even; (ii) $\oOT{B}_{i,2}\colon 70 \rightarrow 70$ if $i$ is even; (iii) $\oOT{B}_{i,1}\colon 40 \rightarrow 40$ if $i$ is odd; (iv) $\oOT{B}_{i,2}\colon 60 \rightarrow 60$ if $i$ is odd.

\paragraph{\texttt{BRoom2}.} The string diagram is given by  $\oOT{A}\seqcomp (\oOT{B}_{1,1}\otimes \dots \otimes \oOT{B}_{1,208})\seqcomp \oOT{C}\colon 100\rightarrow 100$, where 
$\oOT{A}\colon 100\rightarrow 20800$, $\oOT{C}\colon 20800\rightarrow 100$, and $\oOT{B}_{1,j}\colon 100\rightarrow 100$ for all $j$. 

\paragraph{\texttt{URoom1}.} The string diagram is given by  
\begin{align*}
    \oOT{A}\seqcomp (\oOT{B}_{1,1}\otimes \oOT{B}_{1,2})\seqcomp (\oOT{C}_{1,1} \otimes \oOT{C}_{1,2})\seqcomp \dots \seqcomp (\oOT{B}_{100,1}\otimes  \oOT{B}_{100,2})\seqcomp \oOT{D}\colon 10\rightarrow 10,
\end{align*}
where $\oOT{A}\colon 10\rightarrow 500$, $\oOT{C}\colon 10\rightarrow 10$, and $\oOT{B}_{i,j}$ is given by (i) $\oOT{B}_{i,1}\colon 270 \rightarrow 3$; (ii) $\oOT{B}_{i,2}\colon 230 \rightarrow 7$; (iii) $\oOT{C}_{i,1}\colon 4 \rightarrow 240$; (iv) $\oOT{C}_{i,2}\colon 6 \rightarrow 260$, for all $i$.

\paragraph{\texttt{URoom2}.} The string diagram is given by  
\begin{align*}
    \oOT{A}\seqcomp (\oOT{B}_{1,1}\otimes  \oOT{B}_{1,2})\seqcomp (\oOT{C}_{1,1}\otimes \oOT{C}_{1,2})\seqcomp \dots \seqcomp (\oOT{B}_{150,1}\otimes  \oOT{B}_{150,2})\seqcomp \oOT{D}\colon 10\rightarrow 10,
\end{align*}
where $\oOT{A}\colon 10\rightarrow 500$, $\oOT{C}\colon 10\rightarrow 10$, and $\oOT{B}_{i,j}$ is given by (i) $\oOT{B}_{i,1}\colon 270 \rightarrow 3$; (ii) $\oOT{B}_{i,2}\colon 230 \rightarrow 7$; (iii) $\oOT{C}_{i,1}\colon 4 \rightarrow 240$; (iv) $\oOT{C}_{i,2}\colon 6 \rightarrow 260$, for all $i$.

\paragraph{\texttt{BChain1}.} The string diagram is given by $\oOT{A}_1\seqcomp\dots \seqcomp \oOT{A}_{210}$, where $\oOT{A}_i\colon 100\rightarrow 100$ for all $i$.  

\paragraph{\texttt{BChain2}.} The string diagram is given by $\oOT{A}_1\seqcomp\dots \seqcomp \oOT{A}_{400}$, where $\oOT{A}_i\colon 100\rightarrow 100$ for all $i$.

\paragraph{\texttt{UChain1}.} The string diagram is given by $\oOT{A}_1\seqcomp \oOT{B}_1\dots \seqcomp \oOT{A}_{199} \seqcomp \oOT{B}_{199}\seqcomp \oOT{A}_{200}$, where $\oOT{A}_i\colon 10\rightarrow 200$ and $\oOT{B}_i\colon 200\rightarrow 10$ for all $i$.  

\paragraph{\texttt{UChain2}.} The string diagram is given by $\oOT{A}_1\seqcomp \oOT{B}_1\dots \seqcomp \oOT{A}_{399} \seqcomp \oOT{B}_{399}\seqcomp \oOT{A}_{400}$, where $\oOT{A}_i\colon 10\rightarrow 200$ and $\oOT{B}_i\colon 200\rightarrow 10$ for all $i$.

\subsection{Full Detail of The Benchmarks in \texorpdfstring{\cref{fig:infAlgSt}}{Fig:infAlgSt}}
\paragraph{Sequential Compositions (the left figure).} We run the experiment with a set of \texttt{BChains}  $\oOT{A}_1\seqcomp\dots \seqcomp \oOT{A}_{H}$, where $\oOT{A}_i\colon 100\rightarrow 100$ for all $i$. 
The x-axis represents the parameter $H$: we set $H = 100, 200, 300, 400, 500, 600, 700$. 

\paragraph{Parallel Compositions (the right figure).} We run the experiment with a set of \texttt{BRooms} $\oOT{A}\seqcomp (\oOT{B}_{1,1}\otimes \dots \otimes \oOT{B}_{1,H})\seqcomp \oOT{C}\colon 100\rightarrow 100$, where $\oOT{A}_i\colon 100\rightarrow 100H$, $\oOT{C}\colon 100H\rightarrow 100$, and $\oOT{B}_{1, i}\colon 100\rightarrow 100$ for all $i$. 
The x-axis represents the parameter $H$: we set $H = 28, 58, 88, 118, 148, 178, 208$.

\else 
\fi

\ifmemo
\section{Hierarchical Planning}
We notice that there is a simple and efficient hierarchical planning algorithm that can be obtained directly from~\cref{alg:CompOPT}.
Here, we sketch the algorithm. 

By~\cref{alg:CompOPT}, we obtain an optimal transportation plan that describes how much we should transfer distributions from each source to each target. 
Then, we can directly prove the following: for internal planning of open OTs, it suffices to choose the shortest paths to transfer distributions for each pair of source and target. Thus, we can synthesize a hierarchical optimal transportation plan by transferring distributions along the shortest paths, whose mass at each interface is determined by the mass of the optimal transportation plan of the monolithic OT.

\else 
\fi
\end{document}